\newtheorem{theorem}{Theorem}
\newtheorem{remark}{Remark}
\algnewcommand{\Inputs}[1]{%
  \State \textbf{Inputs:} 
   \hspace*{0.3em}\parbox[t]{\linewidth}{\raggedright #1}
}
\algnewcommand{\Initialize}[1]{%
  \State \textbf{Initialize:}
   \hspace*{0.3em}\parbox[t]{.8\linewidth}{\raggedright #1}
}
\algnewcommand{\Output}[1]{%
  \State \textbf{Output:}
   \hspace*{0.3em}\parbox[t]{.8\linewidth}{\raggedright #1}
}
\algnewcommand{\OutputSmoothing}[1]{%
  \State \textbf{Output to smoothing:}
   \hspace*{0.3em}\parbox[t]{.8\linewidth}{\raggedright #1}
}
\def\p(#1|#2){p(#1\,|\,#2)}
\def\q(#1|#2){q(#1\,|\,#2)}
\def\BibTeX{{\rm B\kern-.05em{\sc i\kern-.025em b}\kern-.08em
    T\kern-.1667em\lower.7ex\hbox{E}\kern-.125emX}}
\begin{document}
\title{Model Predictive Inferential Control of Neural State-Space Models for Autonomous Vehicle Motion Planning}
\author{Iman Askari, Ali Vaziri, Xuemin Tu, Shen Zeng, and Huazhen Fang
\thanks{This work was sponsored in part by the U.S. Army Research Laboratory under Cooperative Agreements W911NF-22-2-0207 and W911NF-24-2-0163, and in part by the U.S.    National Science Foundation under
Award CMMI-1847651. {\em (Corresponding author: Huazhen Fang.)}}
\thanks{I. Askari, A. Vaziri, and H. Fang are with the Department of Mechanical Engineering, University of Kansas, Lawrence, KS 66045, USA (e-mail: {askari@ku.edu; alivaziri@ku.edu; fang@ku.edu}).}
\thanks{X. Tu is with the Department of Mathematics, University of Kansas, Lawrence, KS 66045, USA (e-mail: {xuemin@ku.edu}).}
\thanks{S. Zeng is with the Department of Electrical \& Systems Engineering, Washington University in St. Louis, St. Louis, MO 63130, USA (e-mail: {s.zeng@wustl.edu}).}}

\maketitle

\vspace{-5em}
\begin{abstract}
Model predictive control (MPC) has proven useful in enabling safe and optimal motion planning for autonomous vehicles. In this paper, we investigate how to achieve MPC-based motion planning when a neural state-space model represents the vehicle dynamics. As the neural state-space model will lead to highly complex, nonlinear and nonconvex optimization landscapes, mainstream gradient-based MPC methods will struggle to provide viable solutions due to heavy computational load. In a departure, we propose the idea of model predictive inferential control (MPIC), which seeks to infer the best control decisions from the control objectives and constraints. Following this idea, we convert the MPC problem for motion planning into a Bayesian state estimation problem. Then, we develop a new implicit particle filtering/smoothing approach to perform the estimation. This approach is implemented as banks of unscented Kalman filters/smoothers and offers high sampling efficiency, fast computation, and estimation accuracy. We evaluate the MPIC approach through a simulation study of autonomous driving in different scenarios, along with an exhaustive comparison with gradient-based MPC. The simulation results show that the MPIC approach has considerable computational efficiency despite complex neural network architectures and the capability to solve large-scale MPC problems for neural state-space models.
\end{abstract}

\begin{IEEEkeywords}
Model predictive inferential control, model predictive control, motion planning, implicit importance sampling, neural-state-space model, particle filtering, particle smoothing.
\end{IEEEkeywords}

\section{Introduction}

Autonomous driving is emerging as a transformational technology to reshape the future of transportation and bring tremendous advances in human mobility, traffic efficiency, and roadway safety~\cite{Bagloee:JMT:2016}. A primary challenge to mature this technology is to make autonomous vehicles intelligent decision-makers so that they can drive through traffic skillfully at a level on par with, or better than, human drivers. A key decision-making task is motion planning, which is concerned with identifying the trajectories and maneuvers of the vehicle from a starting configuration to a goal configuration~\cite{Paden:TIV:2016}. Motion plans must ensure safety in traffic, comply with driving customs or laws, and offer passenger comfort across different driving scenarios and traffic conditions.

Motion planning has attracted a large body of research in the past decades. Among the various methods, model predictive control (MPC) has demonstrated significant merits~\cite{Paden:TIV:2016,Claussmann:TITS:2020}. At the heart, MPC enables model-based predictive optimization of motion plans in a receding-horizon fashion to provide important benefits. First, it can take advantage of the vehicle's dynamic model and maneuverability in motion planning to achieve simultaneous path planning and tracking. Second, MPC's innate capability of handling state and input constraints will allow to incorporate practical limits into the planning process. Such limitations typically stem from maneuver limits, safety requirements, and comfort demands. However, while MPC carries promises to compute safe and smooth motion plans, it involves nonlinear constrained optimization, which often brings high computational costs and poor convergence guarantee to global optima~\cite{Claussmann:TITS:2020}. The challenges persist despite recent progress on   MPC  motion planner design and become even stronger in another dimension of growing importance---motion planning based on machine learning models.

Machine learning has risen as a remarkable way for vehicle modeling~\cite{Spielberg:Science:2019,Hewing:TCST:2020}. Its unique strength lies in extracting models from data directly. Given abundant and informative data, such data-driven models based on neural networks or others can effectively capture and predict vehicle dynamics under different and even extreme driving scenarios, while showing robustness against uncertainty of various kinds and non-transparent dynamics. However, machine learning-based vehicle models will be non-trivial for MPC-based motion planning, because of their complex and highly nonlinear nonconvex structure.  While gradient-based optimization solvers have been used to deal with MPC of neural network models, they would be computationally expensive and even brittle in some cases,  e.g., inaccurate initial guesses. The computation 
will be prohibitive if one pursues MPC for planning in the context of reinforcement learning~\cite{Nagabandi:ICRA:2018}.

In this paper, we develop an alternative framework to perform MPC motion planner design when neural network vehicle models are used. The framework, referred to as model predictive inferential control (MPIC), inherits the idea of optimal motion planning from MPC, but pivots away from using gradient-based optimization as solvers. Instead, it undertakes inference-driven decision-making in planning and attempts to estimate the optimal motion plans while using the driving requirements as the evidence. This profound shift allows us to draw on the substantial work on nonlinear estimation in the literature and utilize various estimation methods to do MPC motion planning. Particularly powerful and attractive among those methods is particle filtering. This technique exploits Monte Carlo sampling to achieve accurate estimation despite strong nonlinearities~\cite{Doucet:Springer:2001,Sarkka:Cambridge:2013} and thus provides a leverage to treat the difficulty brought by neural networks to motion planning. However, particle filtering in general is computationally expensive, because it requires intensive sampling, especially when the nonlinearity is sophisticated. This motivates us to investigate a new way to do particle filtering in order to execute the MPIC framework. Our approach, by design, will focus on less intensive but more effective sampling to accelerate computation. 

To summarize at a high level, our study presents two main contributions.
\begin{itemize}

\item {\em The development of the MPIC framework for autonomous vehicle motion planning.} We formulate a motion planner based on incremental-input MPC and show the equivalence of the MPC problem to a receding-horizon Bayesian state estimation problem. Given the translation, we then unveil the forward-filtering/backward-smoothing structure for the  MPIC framework, which runs in receding horizons to estimate control decisions and generate motion plans by inference. 

\item {\em A realization of the MPIC framework based on implicit particle filtering/smoothing.} Our work builds upon the principle of implicit importance sampling, which asserts that, if one manages to find high-probability particles, only a moderate number of them are needed for estimation~\cite{Alexandre:PNAS:2009,Chorin:CAMCS:2010}. Guided by this principle, we develop the implementation of implicit particle filtering/smoothing as banks of unscented Kalman filters/smoothers, which has significant computational efficiency and estimation accuracy. The MPIC realization,  named as the MPIC-X algorithm, is validated via extensive simulations.  

\end{itemize}

The rest of the paper is organized as follows. Section~\ref{sec:related-work} gives a review of the related work. Section~\ref{sec:MP-overview} provides an overview of autonomous vehicle motion planning in structured environments and shows the setup of an incremental-input  MPC planner. Section~\ref{sec:MPIC} reformulates the   MPC problem as a Bayesian state estimation problem and then presents the MPIC framework. Section~\ref{Sec:Kalman-IPF/S} develops the new implicit particle filtering/smoothing approach, which is implemented as banks of unscented Kalman filters/smoothers, to realize the MPIC framework, and Section~\ref{Sec:Discussion}   summarizes the discussion.  Section~\ref{Sec:NumSim} offers simulation results to evaluate the proposed framework and algorithm for motion planning. Then, Section~\ref{sec:Experimental} provides another validation based on a real-world vehicle. Finally, concluding remarks are given in Section~\ref{Sec:Conclusion}.

\section{Related Work}
\label{sec:related-work}

\subsection{Autonomous Vehicle Motion Planning}

A vast literature has formed on motion planning for autonomous vehicles in the past years. While the problem presents itself in different formulations, exact solutions are mostly unavailable, and a diversity of numerical methods have thus flourished. An important category of them, sampling-based planning methods randomly sample across the vehicle configuration space to establish a reachability graph and then find trajectories or paths over the graph~\cite{Paden:TIV:2016}. 
RRT is a popular sampling-based planner, which builds a tree incrementally by random sampling from the start to the goal configurations~\cite{LaValle:ARR:1998}. This method is efficient and provably converges to a suboptimal solution with probability one. More studies have well expanded the scope of RRT for autonomous vehicles, leading to many variants. They include kinodynamic RRT for planning under dynamic constraints~\cite{Frazzoli:JGCD:2002} and closed-loop RRT to deal with closed-loop trajectory prediction~\cite{Kuwata:TCST:2009}. RRT* incorporates the notion of optimality in the tree building process to become asymptotically optimal, though demanding more computation~\cite{Karaman:ICRA:2011,Jeon:ACC:2013}. In general, RRT-based methods are effective in searching through nonconvex, high-dimensional vehicle configuration spaces, for which a theoretical guarantee comes from their probabilistic completeness, i.e., they can find a solution with probability one if it exists. Trajectories that they generate, however, can be jerky and hence require postprocessing for smoothness. 
Probabilistic roadmap (PRM) planners also leverage random sampling to build roadmaps in configuration spaces that allow graph search to find a path between the start and goal configurations~\cite{Kavraki:TRO:1996}.  PRM without differential constraints is probabilistically complete and asymptotically optimal~\cite{Karaman:IJRR:2011}, though it can  be made to accommodate differential constraints in implementation~\cite{Schmerling:ICRA:2015}. Some improved PRM methods, e.g., PRM*, can attain both asymptotic optimality and computational efficiency in~\cite{Karaman:IJRR:2011}.
Tangentially related with RRT and PRM, another sampling-based planner is through particle filtering~\cite{Berntorp:TIV:2019}. Particle filtering is based on sequential Monte Carlo sampling and when applied to motion planning, can sample trajectories based on the driving requirements to achieve better sampling efficiency. As another difference from RRT-based methods, this technique samples the lower-dimensional control input space rather than the vehicle configuration space. 

Despite optimality (or suboptimality) in the probabilistic sense, sampling-based methods can hardly generate truly optimal motion plans in real world because of limited computation and time. Thus, numerical optimization has come as a natural choice to enable optimality-driven motion planning design. This approach can also conveniently include differential or non-holonomic constraints due to vehicle dynamics into the planning process. Some studies have pursued the optimization of trajectories or paths parameterized in certain forms as well as dynamic modification of preplanned paths~\cite{Rucco:CDC:2012,Roesmann:ROBOTIK:2012,Shomin:IROS:2014}. Model predictive control (MPC) has found itself especially suitable for autonomous vehicles operating in dynamic environments and gained considerable interest recently~\cite{Funke:TCST:2017,Nilsson:CEP:2015,Guo:TII:2018,Liu:IV:2017,Claussmann:TITS:2020,Wei:JAS:2023}. This is because it performs dynamic, receding-horizon, and predictive optimization of motion plans. Also, MPC planners can handle the primary concern in driving---safety---by including hard driving constraints resulting from safety or vehicle dynamics, and treat secondary concerns, e.g., passenger comfort and driving ethics, by encoding them into cost functions~\cite{Eiras:TRO:2022,Thornton:TIV:2017}. However, MPC must solve nonlinear, nonconvex constrained optimization problems here, so it may struggle to converge to optimal solutions, or sometimes, feasible solutions even if they exist, and also face high computational costs~\cite{Claussmann:TITS:2020}. To improve convergence, the study in~\cite{Eiras:TRO:2022} develops a two-stage optimization framework, which enforces hard safety or driving constraints in the first stage and then polishes the solution for feasibility and smoothness within the safety bounds in the second stage.  In~\cite{Qian:ITSC:2016}, the motion planning is decomposed into path planning and velocity planning so as to speed up the computation of MPC in implementation. Some recent studies have rallied around using iterative linear quadratic regulation~\cite{Chen:TIV:2019} and differential dynamic programming~\cite{Plancher:IROS:2017,Manchester:CDC:2016,Huang:Allerton:2023}   to approximately solve MPC-based motion planning and trajectory optimization problems.

\subsection{Optimal Control via  Estimation}

The connections between control and estimation have been a fundamental research topic. The seminal work~\cite{Kalman:JBE:1960} by Rudolf Kalman in   1960 elucidates the duality between the linear quadratic regulator and Kalman smoother for linear systems. Even though such exact duality was long considered hardly generalizable, some illuminating studies manage to formulate nonlinear stochastic optimal control and estimation problems dual to each other~\cite{Todorov:CDC:2008,Todorov:PNAS:2009, Kim:TAC:2023}. Optimal control computation often requires great amounts of time and memory, but the control-estimation synergy makes it possible to cast control design as explainable estimation problems which are more tractable to solve. The work in~\cite{Kappen:ML:2012} proposes stochastic optimal control by variational inference based on Kullback-Leibler divergence minimization, which has initiated a line of inquiry. Especially, a special case of the approach~\cite{Kappen:ML:2012} is the path integral control method developed in~\cite{Kappen:PRL:2005}. This relevance has inspired a body of work known as model predictive path integral control (MPPI)~\cite{Williams:JGCD:2017,Williams:TRO:2018}.  
One can  use numerical variational inference methods based on stochastic optimization, Markov Chain Monte Carlo sampling, importance sampling, and others, to compute optimal control for these methods. 
In~\cite{Toussaint:ICML:2009,Watson:CoRL:2020}, stochastic optimal control is handled through recursive Bayesian state estimation based on belief propagation, despite only an approximate relationship between the respective control and estimation problems in this case. 
However, for deterministic nonlinear systems, optimal control and MPC problems can find equivalent counterparts in recursive Bayesian state estimation problems~\cite{Stahl:SCL:2011}, and particle filtering provides a useful means to deal with these problems due to its power in handling nonlinearity. 
In our preliminary work~\cite{Askari:ACC:2021,Askari:ACC:2022}, we develop constraint-aware particle filtering/smoothing to perform nonlinear MPC with constraints. Closely tied to optimal control,   reinforcement learning has found some solutions based on  Bayesian estimation~\cite{Levine:arXiv:2018} and variational inference~\cite{Rawlik:RSS:2012}.

\subsection{ MPC of Machine Learning Dynamic Models}

The sweeping successes of machine learning has exponentially driven the study of learning-based MPC for dynamic systems~\cite{Rosolia:ARCRAS:2018,Hewing:ARCRAS:2020}. A  vast literature has formed around a rich set of topics. We narrow the attention here to the specific subject of MPC of neural networks. Neural networks have a history of being used in the data-driven modeling of dynamic systems. Such models are often called neural state-space (NSS) models, and MPCs for them have garnered many applications~\cite{Hewing:ARCRAS:2020}. To date, the solvers are mostly based on gradient-based optimization~\cite{Piche:CSM:2000,Lawrynczuk:Springer:2014,Salzmann:RAL:2023,Bemporad:TAC:2023,Pohlodek:ACC:2023,Cursi:RAL:2021}. Gradient-based search, however, will easily lead 
MPC computation to get stuck in local minima, as NSS models are highly nonlinear and nonconvex.  The study in~\cite{Bunning:PMLR:2021} suggests explicitly constructing neural networks that are convex with respect to the input so as to avoid nonconvex optimization, but such input-convex neural networks would have restricted representation capacity. MPC of NSS models is an essential step in model-based reinforcement learning but faces unaffordable computation when applying gradient-based solutions to models based on deep neural networks. In this context, a simple technique is to randomly generate many candidate control decision sequences and then pick the sequence that leads to the minimum cost after being applied to the NSS model~\cite{Nagabandi:ICRA:2018,Yang:CoRL:2020}. This technique, however, can hardly balance between accuracy and computation.  The problem of  MPC for NSS models hence still remains widely open to new solutions. On a related note,  MPC for systems described by Gaussian processes has recently seen burgeoning studies, e.g., ~\cite{Maiworm:IJRR:2021,Ostafew:IJRR:2016,Hewing:TCST:2020}, and there arise similar challenges.

\subsection{Highlights of Differences of the Study}

The proposed study takes inspirations from many works in the literature, but distinguishes itself from existing research in different dimensions. A summary is as below.

This study and~\cite{Berntorp:TIV:2019} both use particle filtering for autonomous vehicle motion planning. However,  the method in~\cite{Berntorp:TIV:2019} is purely based on Bayesian estimation, leaving its optimality unclear. 
By contrast, our work builds upon an MPC-based formulation and then establishes an equivalent Bayesian estimation problem. This thus infuses optimality into motion planning.  Contrasting~\cite{Berntorp:TIV:2019} further, our study proposes a different, more efficient particle filtering/smoothing method based on implicit importance sampling to run the planning process. 

The work in~\cite{Stahl:SCL:2011} performs MPC without constraints based on particle filtering. This study differs on several aspects. First, we consider incremental-input MPC along with constraints. Second, while conventional state-space models are the focus in~\cite{Stahl:SCL:2011}, what we attempt to deal with is NSS models.  Finally, the solver in~\cite{Stahl:SCL:2011} is bootstrap particle filtering, we show the need to use particle filtering/smoothing and develop a new particle filter/smoother faster and more accurate in estimation.

The work in~\cite{Nagabandi:ICRA:2018,Yang:CoRL:2020} uses random sampling in a forward-simulation manner to find out the best control decisions for the MPC of an NSS model. Easy to implement as it is,
this method requires to exhaustively search through the control space at the cost of computation to achieve just sufficient accuracy. By contrast, the particle filter/smoother enables principled sequential Monte Carlo sampling to gain better accuracy and faster computation. 

The MPPI method in~\cite{Williams:JGCD:2017,Williams:TRO:2018} builds on an information-theoretic formulation of path integral control for stochastic systems, using concepts like free energy, relative entropy, and variational inference. Its implementation relies on a sampling-based iterative search. The search seeks to update control decisions across an entire horizon at each iteration  and often requires a large number of samples, due to the need for calculating path integrals. These factors potentially cause  relatively  expensive computational costs when using NSS models. By comparison, we consider MPC of a deterministic NSS model by Bayesian state estimation rather than variational inference. The recursive structure of Bayesian estimation leads to sequential-in-time computation within a horizon, and further, we design a new particle filter/smoother to use few but highly probable particles. These will accelerate computation significantly to benefit practical implementation. Also, unlike the MPPI method, our approach, by design, explicitly incorporates constraints  and accommodates incremental-input MPC. Meanwhile, it is of our future interest to extend the proposed study to control of stochastic systems. 

Preceding this work, we have developed some preliminary studies~\cite{Askari:ACC:2021,Askari:ACC:2022} about MPC by particle filtering for motion planning. This paper presents two substantial changes. First, we propose to investigate incremental-input MPC of NSS models and convert it into the MPIC framework via Bayesian state estimation. This incremental-input MPC setup is more general, improving constraint satisfaction and smoothness of motion plans. Second, rather than using the bootstrap particle filter and reweighted particle smoother as in~\cite{Askari:ACC:2021,Askari:ACC:2022}, we develop a new implicit particle filter/smoother,   which is structured as banks of unscented Kalman filters/smoothers and designed to draw highly probable particles, to execute the MPIC framework. The resulting method thus has greatly higher sampling and computational efficiency.

\section{Overview of Motion Planning}
\label{sec:MP-overview}
In this section, we first introduce a vehicle model based on neural networks. Then, we focus on the formulation of the motion planning problem by presenting the autonomous driving requirements and subsequently setting up the MPC-based planning problem. 


\subsection{Neural State-Space Vehicle Modeling}


 We consider a vehicle in the global coordinate system. Its state at time $k$ is $\bm x_k = \left[ \begin{matrix} X_k & Y_k & \Phi_k & V_k  \end{matrix} \right]^\top$, where $\left(X_k, Y_k \right)$ is the position, $\Phi_k$ is the heading angle, and $V_k$ is the speed. The vehicle's control input  is $\bm u_k = \left[ \begin{matrix} a_k & \delta_k  \end{matrix} \right]^\top$, where $a_k$ is the acceleration and $\delta_k$ is the steering angle. We use a neural network to capture the vehicle's state evolution in the continuous-time domain:
\begin{align*} 
\dot {\bm x} = f_{\mathrm{NN}}( \bm x, \bm u),
\end{align*}
where $f_{\mathrm{NN}}$ is a feedforward neural network. 
The discrete-time state evolution is then governed by
\begin{align}\label{RNN_X_diff}
 {\bm x}_{k+1} = \hat  f ( \bm x_k, \bm u_k).
\end{align}
We can construct $\hat f$     using different numerical discretization methods. It is often straightforward to use the first-order Euler method, i.e.,
\begin{align*}
  \hat  f ( \bm x_k, \bm u_k) = \bm x_k  + \Delta T \cdot   f_{\mathrm{NN}}( \bm x_k, \bm u_k),
\end{align*}
which is effective if the sampling period $\Delta T$ is small enough. Other methods include the Runge-Kutta schemes, which are more sophisticated and offer better accuracy. Note that alternative ways exist to set up~\eqref{RNN_X_diff}, depending on a vehicle's sensor configuration and data types.  For example, one can train an end-to-end neural network for $\hat f$ directly if $\bm x_k$ and $\bm u_k$ are measured at every time $k$.

As depicted in Fig~\ref{fig:NNS}, the model in~\eqref{RNN_X_diff} is 
an NSS model~\cite{Forgione:arXiv:2022}. While taking a concise mathematical form, this model can use multiple hidden layers in the sense of deep learning to extract accurate representations of vehicle dynamics from data. This model also admits different expansions for higher predictive accuracy. For instance,  $ f_{\mathrm{NN}}$ can be designed to use history information to do prediction~\cite{Spielberg:Science:2019}. In this case, its output is still $\bm x_{k+1}$, but its input is $\left \{ \bm x_{k-M:k}, \bm u_{k-M:k} \right\}$, which is the state history over the previous $M$ steps. However, the setup in~\eqref{RNN_X_diff} suffices for our study in this paper, with the proposed results generalizable to more complex NSS vehicle models.

\begin{figure}[t]\centering
\includegraphics[width=0.43\textwidth, trim={0.0cm 0.0cm 0.0cm 0.0cm},clip]{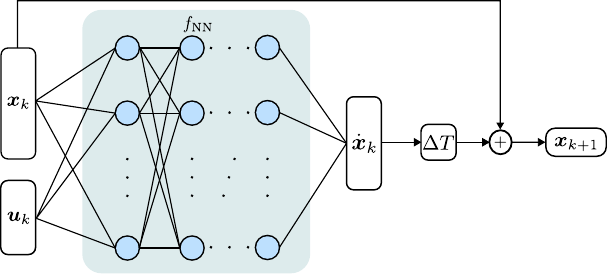}
\caption{Neural state-space model for vehicle dynamics.}
\label{fig:NNS}
\end{figure}

\begin{remark}
Accurate models are key to generating motion plans that ensure vehicle safety and driving performance in dynamic environments.  In general, vehicle modeling can be attained through either employing physical principles or utilizing machine learning techniques.  Physics-based models have mechanistic fidelity to a vehicle's dynamic behavior while presenting themselves in a relatively compact form as a set of nonlinear ordinary differential equations. But they often encounter challenges in capturing the full range of various uncertain effects acting on the vehicle and demand tedious human efforts in model derivation and calibration. Data-driven neural networks have thus gained increasing popularity for vehicle modeling in recent years. Their success results from their universal function approximation properties and powerful descriptive capabilities to capture even highly dynamic vehicle behaviors when given abundant data~\cite{Spielberg:Science:2019,Herman:CIST:2020}. They also allow efficient training and deployment if abundant data are available. However, neural networks are highly nonlinear and nonconvex and require formidable amounts of computation in optimal motion planning. This is the main challenge that motivates the study in this paper. 
\end{remark}

\subsection{Driving  Requirements and Objectives} \label{Constraints}

An autonomous vehicle is expected to operate responsibly within traffic, demonstrating safety, ethics, and predictability in its maneuvers. To this end, it should adhere to some driving requirements and constraints. 
Specifically, the vehicle (referred to as the ego vehicle or EV) must avoid collision with other traffic participants (referred to as obstacle vehicles or OVs), remain within the boundaries of the road, operate in its practical actuation limits,  and follow some nominal guidance (in path, velocity, etc.) generated by a higher-level decision maker.

\textit{Obstacle Avoidance:} Collision avoidance is the top priority for motion planning. This requires the EV to always keep a safe distance from  OVs in its vicinity. We can designate a safety area for each vehicle by bounding the vehicle with certain margins. We denote the safety area as $\mathcal{B}$ indiscriminately for all the EV and OVs for notational simplicity, where $\mathcal{B}$ may take the shape of an ellipsoid or a rectangle. At time $k$ in the planning horizon, the EV's state is $\bm x_k$, and OV $i$'s state is $\bm x_k^{\mathrm{OV},i}$ for $i=1,2,\ldots,N_O$, where $N_O$ is the number of OVs. We then represent the safety areas virtually occupied by the EV and OVs, respectively,  as 
\begin{align*}
\mathcal{S}_k = \mathcal{S} \left(\bm x_k, \mathcal{B} \right), \ \ \mathcal{O}_k^i = \mathcal{O} \left(\bm x_k^{\mathrm{OV},i}, \mathcal{B} \right), \ i=1,2,\ldots, N_O. 
\end{align*}
No collision implies 
\begin{align*}
\mathcal{S}_k \cap \mathcal{O}_k^i  = \emptyset, \ i=1,2,\ldots, N_O. 
\end{align*}
To increase the safety margin, we further impose
\begin{align}\label{Collision-Avoidance-Constraint}
\mathrm{dist} \left(\mathcal{S}_k, \mathcal{O}_k^i  \right)  \geq \underline{d}_O, \ i=1,2,\ldots, N_O, 
\end{align}
where $\mathrm{dist}(\cdot, \cdot) $ is the ordinary distance between two nonempty sets~\cite{Conci:AMSA:2017}, and $\underline{d}_O$ is the required minimum distance between the EV and an OV. 

\textit{Road Boundary Constraints:} The EV should stay within the lane boundaries for predictable and safe driving. 
Its   deviation from the lane's centerline $\mathcal{LC}$ then must lie  within
\begin{align}\label{boundary-constraint}
- {W_L \over 2}  + \underline{d}_L \leq \mathrm{dist}(\bm x_k, \mathcal{LC}) \leq  {W_L \over 2} - \underline{d}_L.
\end{align} 
In above,  $W_L$ is the lane width, and $\underline{d}_L$ is a restrictive margin. 

\textit{Vehicle Actuation Limits:} The EV's actuation $\bm u_k$, including the acceleration and steering angle,  is subject to physical limits during the maneuver. 
We thus constrain $\bm u_k$   in motion planning as follows:
\begin{align}\label{control-constraint}
\underline{\bm u} \leq \bm u_k \leq \bar{\bm u}, 
\end{align}
where $\underline{\bm u}$ and $\bar{\bm u}$ are the lower and upper  limits, respectively.  Furthermore, the EV should also bound its ramp-up and ramp-down rates in actuation to meet the need for passenger comfort and improve the smoothness in a computed motion plan. We cast this requirement as a constraint on the incremental control input $\Delta \bm u_k = \bm u_k - \bm u_{k-1}$. Specifically, 
\begin{align}\label{control-increment-constraint}
\Delta \underline{\bm u} \leq \Delta  \bm u_k \leq \Delta  \bar{\bm u}, 
\end{align}
where $\Delta \underline{\bm u}$ and $\Delta \bar {\bm u}$ are the lower and upper control increment limits, respectively.

\textit{Driving Objectives:} While autonomous driving must enforce all the above constraints, it is also crucial that the EV strikes a balance between different objectives, including consistency with the 
 prescribed driving specifications, energy efficiency, and motion comfort, as a human driver often does. A logical approach to this end is to perform  multi-objective minimization of a cost function that penalizes 
\begin{itemize}
\item the tracking error between $\bm x_k$ and the nominal $\bm r_k$ from the higher-level decision maker;
\item the difference between $\bm u_k$ and the prescribed nominal control input $\bm s_k$;
\item the passenger discomfort resulting from $\bm u_k$ and $\Delta \bm u_k$. 
\end{itemize}

Here,  $\bm s_k$ can be selected to meet the need of tracking  $\bm r_k$ or just be zero. The case of $\bm s_k = \bm 0$ implies an intention to minimize actuation efforts in driving. One may also drop this objective if they consider it insignificant in motion planning or find $\bm s_k$ laborious to determine. 

\subsection{Motion Planning Synthesis}

Having laid out the driving requirements and objectives, we are now ready to synthesize the motion planning problem. To begin with, we   express the   constraints in~\eqref{Collision-Avoidance-Constraint}-\eqref{control-increment-constraint} notationally as
\begin{align}\label{collection-constraints}
g_j(\bm x_k, \bm u_k, \Delta \bm u_k) \leq 0,  \    j = 1,\hdots,m,
\end{align}
where $g_j$ is a nonlinear or linear function depending on the specific constraint, and $m$ is the total number of constraints. Consider motion planning within a receding horizon $[k, k+H]$, where $H$ is the horizon length. We have the following  cost function to encode the driving objectives:
\begin{align}\label{cost-function}
& J \left( \bm x_{k:k+H}, \bm u_{k:k+H}, \Delta \bm u_{k:k+H}\right) = \\ & \quad  \quad \quad \quad \quad  \sum_{t=k}^{k+H} \lVert \bm x_{t} - \bm r_{t} \rVert_{\bm R}^2 + \lVert \bm u_t - \bm s_t \rVert_{\bm Q_{\bm u}}^2 + \lVert \Delta \bm u_{t} \rVert_{\bm Q_{\Delta \bm u}}^2, \nonumber
\end{align}
where  $\bm x_{k:k+H} = \left\{ \bm x_k, \ldots, \bm x_{k+H} \right\}$  (the same notation applies to $\bm   u_{k:k+H}$ and $\Delta \bm  u_{k:k+H}$),  $\lVert \cdot \rVert_{\bm S}^2 = \left(\cdot\right)^\top {\bm S}^{-1} \left(\cdot\right)$, and $\bm R$, $\bm Q_{\bm u}$ and $\bm Q_{\Delta \bm u}>0$ are symmetric positive-definite weighting matrices.  One can modify $J$   to have a terminal cost in a distinct form, causing little change to the subsequent development.  
Summarizing the above, an incremental MPC problem is stated as follows for motion planning by the EV:
\begin{subequations}\label{incremental-MPC}
\begin{align}
   \min \quad & J \left( \bm x_{k:k+H}, \bm u_{k:k+H}, \Delta \bm u_{k:k+H}\right), \label{cost-func}\\
\mathrm{s.t.} \quad & \bm x_{t+1} = \hat f( \bm x_{t},\bm u_{t} ) , \label{nn-constraint}\\
& \bm u_{t+1} = \bm u_t + \Delta \bm u_{t+1}, \hspace{0.02cm}
 \footnotesize t = k-1, \ldots, k+H-1, \label{control-update} \\
& g_j(\bm x_t,\bm u_t, \Delta \bm u_t) \leq 0, \quad j=1,\ldots,m, \label{mpc-constraints}\\ 
\nonumber 
&  \footnotesize t = k, \ldots, k+H. 
\end{align}
\end{subequations}

Based on~\eqref{incremental-MPC}, we get an online motion planner.  At every time $k$, it computes the optimal values  $\bm x_{k:k+H}^*$ and $\bm u_{k:k+H}^*$ as the current motion plan,    and then repeats the procedure in a receding-horizon manner as time goes by. 

Conventionally, the primary approach to solving the problem in~\eqref{incremental-MPC} is through gradient-based numerical optimization. However,  the NSS model that appears as a constraint in~\eqref{nn-constraint} turns the optimization problem into a highly nonlinear and nonconvex one, thus posing an immense challenge for computation and practical implementation. Also,  most numerical optimization approaches will result in only local sub-optimal solutions here---although some optimization schemes may help find global optima, they will just make the computation even more expensive~\cite{Du:TIE:2016}.  Breaking away from the tradition, we will examine the above MPC problem from an estimation perspective and formulate the MPIC framework to address it.

\section{Model Predictive Inferential Control}
  \label{sec:MPIC}
In this section, we will develop the MPIC framework to control  NSS models.  Key to the development is converting the MPC problem in~\eqref{incremental-MPC} into a Bayesian state estimation problem. The resultant MPIC formulation will seek to infer or estimate, rather than optimize as in~\eqref{incremental-MPC}, the best control decisions  (i.e., motion plans) using the driving specifications and constraints (i.e., driving objectives and requirements). With this characteristic, MPIC will lend itself to be implemented by very efficient sampling-based estimation methods, as will be detailed in Section~\ref{Sec:Kalman-IPF/S}.

\subsection{MPC Through the Lens of State  Estimation}

Although~\eqref{incremental-MPC} poses a control problem, we can look at it from an estimation perspective. 
The central idea lies in treating the nominal driving specifications and constraints as the evidence and then using the evidence to identify what the decision variables should be.  To explain the idea, we begin with setting up a virtual dynamic system:

\begin{equation} \label{Virtual-NSS-model}
\left\{
\begin{aligned} 
    \bm x_{t+1}&= \hat f( \bm x_t, \bm u_t),\\
      \bm   u_{t+1} &= \bm  u_t + \Delta \bm u_{t+1},\\
	\Delta \bm u_{t+1} &= \bm w_t, \\
       \bm y_{\bm x, t} &= \bm  x_t + \bm  v_{\bm x, t},\\
	\bm y_{\bm u, t} &= \bm  u_t + \bm  v_{\bm u, t},\\
            y_{g, t} &= \sum_{j=1}^m\psi\left( g_j\left(\bm  x_t, \bm  u_t, \Delta \bm u_t \right) \right)  + \varepsilon_t , 
\end{aligned}
\right.
\end{equation}
for $t = k, \ldots, k+H$, where $\bm x_t$, $\bm u_t$ and $\Delta \bm u_t$ are the state variables of the virtual system,   $\bm y_{\bm x, t}$, $\bm y_{\bm u, t}$ and $  y_{g,t}$ are the     measurement variables, and $\bm w_t$, $\bm v_{\bm x,t}$, $\bm v_{\bm u, t}$ and $\varepsilon_t$ are the bounded disturbances. Also, $\psi$ is a barrier function used to measure the constraint satisfaction, which   nominally is
\begin{align}\label{Ideal-Barrier-Funcion}
\psi (x)  = \left\{
\begin{matrix}
0, &  x \leq 0 ,\cr
\infty, & x>0.
\end{matrix}
\right.
\end{align} 
Note that the virtual system replicates the dynamics of the original system as considered  in~\eqref{incremental-MPC}. Further, it introduces $\bm y_{\bm x, t}$, $\bm y_{\bm u, t}$ and $  y_{g,t}$  as the virtual observations of its own behavior---the 
  behavior must correspond to how the MPC formulation in~\eqref{incremental-MPC} steers the original system to behave. This implies that   $\bm y_{\bm x, t}$, $\bm y_{\bm u, t}$ and $  y_{g,t}$ for $t=k,\ldots, k+H$ in an abstract sense should  take the following values, respectively:
\begin{itemize}
\item $\bm y_{\bm x,t} = \bm r_t$ such that $\bm x_t$ follows $\bm r_t$;

\item $\bm y_{\bm u,t} = \bm s_t$ such that $\bm u_t$  follows $ \bm s_t$;

\item $y_{g,t} =   0$ such that all the constraints are satisfied. 
\end{itemize}

For~\eqref{Virtual-NSS-model}, we can pose a moving horizon estimation (MHE) problem to estimate its state, which is given by
\begin{subequations}\label{MHE-problem}
\begin{align}
\min_{} \quad & \sum_{t=k}^{k+H}\lVert \bm v_{\bm x, t} \rVert_{\bm R}^2 + \lVert \bm v_{\bm u, t}\rVert_{\bm Q_{\bm u}}^2 + \lVert \bm w_{t-1}\rVert_{\bm Q_{\Delta \bm u}}^2 + Q_\varepsilon  \varepsilon_t^2,  \\
\mathrm{s.t.} \quad & \bm x_{t+1} = \hat{f}(\bm x_{t},\bm u_{t} ) , \label{mhe-dynamic constraint} \\
& \bm u_{t+1} = \bm u_t + \Delta \bm u_{t + 1}, \\
& \Delta \bm u_{t + 1} = \bm w_t,  \quad   t = k-1, \ldots, k+H-1,\label{mhe-control-constraint} \\
& \bm y_{\bm x,t} = \bm x_t + \bm v_{\bm x, t},  \label{mhe-ref-signal} \\
& \bm y_{\bm u,t} = \bm u_t + \bm v_{\bm u, t}, \label{mhe-control-ref-signal}\\
& y_{g, t}  = \sum_{j=1}^m\psi\left( g_j(\bm  x_t, \bm  u_t, \Delta \bm u_t) \right)  + \varepsilon_t, \\ \nonumber
&  t = k, \ldots, k+H. 
\end{align}
\end{subequations}
where $  Q_\varepsilon >0$. The above problem in~\eqref{MHE-problem} is mathematically equivalent to the MPC problem in~\eqref{incremental-MPC}. The only difference
in their forms is that~\eqref{MHE-problem}  transforms the hard constraints into a penalty term in the cost function as a soft constraint. The relationship between~\eqref{incremental-MPC} and~\eqref{MHE-problem} recalls the duality between MPC and MHE, for which an interested reader is referred to~\cite{Rao:TAC:2003}.  
By solving~\eqref{MHE-problem}, we can obtain the optimal      estimates  of $\bm x_{k:k+H}^*$ and $\bm u_{k:k+H}^*$. But we would face the same computational struggle that afflicts~\eqref{incremental-MPC} if using gradient-based optimization solvers. However,  the MHE formulation would open us to the opportunity of using Bayesian estimation to develop computationally fast solutions. 


\subsection{MPIC Based on Bayesian State Estimation}

Proceeding forward, we write~\eqref{Virtual-NSS-model} compactly as
\begin{equation}\label{virtual-NSS-compact}
\left\{
\begin{aligned}
 \bar{\bm{x}}_{t+1} &= \bar{\bm f}(\bar{\bm{x}}_t) + \bar{\bm{w}}_t, \\
\bar{\bm y}_t &= 
\bar{\bm h} (\bar{\bm {x}}_t) + \bar{\bm{v}}_t,
\end{aligned}
\right.
\end{equation}
where 
\begin{gather*}
\bar{\bm x}_t  = \left[ 
\begin{matrix}
\bm x_t \cr \bm u_t \cr \Delta \bm u_t
\end{matrix}
\right], 
\; 
\bar{\bm y}_t = \left[ 
\begin{matrix}
\bm y_{\bm x, t} \cr \bm y_{\bm u, t}  \cr y_{g, t}
\end{matrix}
\right],
\; 
\bar{\bm w}_t = \left[ 
\begin{matrix}
\bm 0 \cr \bm w_t \cr \bm w_t
\end{matrix}
\right], 
\; 
\bar{\bm v}_t = \left[ 
\begin{matrix}
\bm v_{\bm x, t} \cr \bm v_{\bm u, t}  \cr \varepsilon_t
\end{matrix}
\right], \\
\bar {\bm f} \left(\bar {\bm x}_t\right) =  \begin{bmatrix}\hat{f}(\bm{x}_{t},\bm{u}_t )\\ \bm{u}_{t}\\  \bm 0
\end{bmatrix}, 
\;
\bar{\bm h}(\bar{\bm x}_t)  = \left[ 
\begin{matrix}
\bm x_t \cr \bm u_t \cr  \displaystyle \sum_{j=1}^m\psi\left( g_j(\bm  x_t, \bm  u_t, \Delta \bm u_t) \right)
\end{matrix}
\right].
\end{gather*}
While~\eqref{Virtual-NSS-model} is subject to bounded disturbances, we   assume~\eqref{virtual-NSS-compact} to be a stochastic system by letting $\bar {\bm w}_t$ and $\bar {\bm v}_t$ be random noise variables.  This implies that $\bar {\bm w}_t$ and $\bar {\bm v}_t$ follow certain probability distributions, and the same holds for $\bar {\bm x}_t$ and $\bar {\bm y}_t$. 

For~\eqref{virtual-NSS-compact}, it is of our interest   to conduct   state estimation of $\bar {\bm x}_t$ for $t=k,\ldots, k+H$ given $\bar {\bm y}_t = \bar {\bm r}_t$ with $
\bar {\bm r}_t = \left[ 
\begin{matrix}
\bm r_t^\top & \bm s_t^\top & 0 
\end{matrix}
\right]^\top$.
This would boil down to considering the posterior probability density function $\p(\bar {\bm x}_{k:k+H} | \bar {\bm y}_{k:k+H} = \bar {\bm r}_{k:k+H}, \bar{\bm x}_{k-1} )$, which captures all the information or knowledge that $\bar {\bm y}_{k:k+H}$ contains about the unknown $\bar {\bm x}_{k:k+H}$. To determine a quantitative  estimate of $\bar {\bm x}_{k:k+H}$, a useful and popular approach is Bayesian maximum a posteriori (MAP) estimation:
\begin{equation}\label{MAP-problem}
\begin{aligned}
\hat{\bar{\bm x}}_{k:k+H}^*  = \arg \max_{\bar {\bm x}_{k:k+H}} \log \p(\bar {\bm x}_{k:k+H} | \bar {\bm y}_{k:k+H} = \bar {\bm r}_{k:k+H}, \bar{\bm x}_{k-1}).
\end{aligned} 
\end{equation} 
We will use $\p(\bar {\bm x}_{k:k+H} | \bar {\bm y}_{k:k+H}, \bar{\bm x}_{k-1})$ by dropping $\bar {\bm r}_{k:k+H}$ in the sequel for notational simplicity. 
The problem in~\eqref{MAP-problem} echoes the MHE problem in~\eqref{MHE-problem} in a way as shown below.

\begin{theorem}\label{theorem-MHE-MAP-equal} Assume that $ \bar{\bm w}_{k:k+H}  $ and $ \bar{\bm v}_{k:k+H}  $ are mutually independent white Gaussian noise processes with 
\begin{align*}
\bar{\bm{w}}_t \sim \mathcal{N}\left(\mathbf{0},  \bar{\bm{Q}}\right), \quad   {\bar{\bm v}}_t \sim \mathcal{N}(\mathbf{0}, \bar{\bm{R}}),
\end{align*}
 where 
\begin{align*}
\bar{\bm{Q}} = 
\left[
\begin{matrix}
\bm{0} &  \bm{0} & \bm{0}\cr  \bm{0}& \bm{Q}_{\Delta \bm u} & \bm{Q}_{\Delta \bm u} \cr \bm{0} & \bm{Q}_{\Delta \bm u} & \bm{Q}_{\Delta \bm u}
\end{matrix}
\right], \quad
\bar{\bm{R}} = \textup{diag}(\bm{R} , \bm{Q}_{\bm u}, Q_\varepsilon).
\end{align*} 
Then, the problems  in~\eqref{MHE-problem} and~\eqref{MAP-problem} will have the same optima. 
\end{theorem}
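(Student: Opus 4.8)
The plan is to expand the log-posterior in \eqref{MAP-problem} into a sum of log-transition and log-measurement terms using the hidden-Markov structure of \eqref{virtual-NSS-compact}, and then match each term against the corresponding penalty in the MHE objective \eqref{MHE-problem}. First I would invoke the white-noise assumption, which makes the measurements conditionally independent given the states and makes the state sequence Markov, so that
\begin{align*}
\p(\bar{\bm x}_{k:k+H} | \bar{\bm y}_{k:k+H}, \bar{\bm x}_{k-1}) \propto \prod_{t=k}^{k+H} \p(\bar{\bm y}_t | \bar{\bm x}_t) \prod_{t=k}^{k+H} \p(\bar{\bm x}_t | \bar{\bm x}_{t-1}),
\end{align*}
where the conditioning value $\bar{\bm x}_{k-1}$ supplies the known initial condition for the first transition factor, and the suppressed normalizing constant does not depend on $\bar{\bm x}_{k:k+H}$ and so is irrelevant to the $\arg\max$.

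For the measurement factors, since $\bar{\bm R} = \textup{diag}(\bm R, \bm Q_{\bm u}, Q_\varepsilon)$ is block-diagonal, the density $\mathcal{N}(\bar{\bm y}_t; \bar{\bm h}(\bar{\bm x}_t), \bar{\bm R})$ splits into three independent pieces. Substituting $\bm v_{\bm x,t} = \bm y_{\bm x,t} - \bm x_t$, $\bm v_{\bm u,t} = \bm y_{\bm u,t} - \bm u_t$, and $\varepsilon_t = y_{g,t} - \sum_{j=1}^m \psi(g_j(\cdot))$, the negative logarithm of this factor produces exactly $\tfrac12\big(\lVert \bm v_{\bm x,t}\rVert_{\bm R}^2 + \lVert \bm v_{\bm u,t}\rVert_{\bm Q_{\bm u}}^2 + Q_\varepsilon^{-1}\varepsilon_t^2\big)$, which aligns term-for-term with the measurement penalties in \eqref{MHE-problem}.

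The transition factors are where the real work lies, and I expect the main obstacle to be the \emph{singularity} of $\bar{\bm Q}$: because the $\bm x$-block of $\bar{\bm w}_t$ is zero and its $\bm u$- and $\Delta\bm u$-blocks share the single copy $\bm w_t$, the matrix $\bar{\bm Q}$ is rank-deficient, $\p(\bar{\bm x}_t | \bar{\bm x}_{t-1})$ is a degenerate distribution, and the naive Gaussian formula with $\bar{\bm Q}^{-1}$ is unavailable. To handle this I would write the transition density as a product of Dirac factors enforcing the deterministic relations times one nondegenerate Gaussian in the effective noise,
\begin{align*}
\p(\bar{\bm x}_t | \bar{\bm x}_{t-1}) = \delta\!\left(\bm x_t - \hat f(\bm x_{t-1},\bm u_{t-1})\right)\,\delta\!\left(\bm u_t - \bm u_{t-1} - \Delta\bm u_t\right)\,\mathcal{N}\!\left(\Delta\bm u_t; \bm 0, \bm Q_{\Delta\bm u}\right).
\end{align*}
The delta factors restrict the support of the posterior to precisely the trajectories satisfying the dynamic and input-update equalities, thereby playing the role of the hard equality constraints in \eqref{MHE-problem} (including \eqref{mhe-control-constraint}); on that support the Gaussian factor contributes $\tfrac12\lVert\Delta\bm u_t\rVert_{\bm Q_{\Delta\bm u}}^2 = \tfrac12\lVert\bm w_{t-1}\rVert_{\bm Q_{\Delta\bm u}}^2$ to the negative log-posterior.

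Finally I would collect the terms: summing the transition and measurement contributions shows that $-\log \p(\bar{\bm x}_{k:k+H} | \bar{\bm y}_{k:k+H}, \bar{\bm x}_{k-1})$ equals $\tfrac12$ times the objective of \eqref{MHE-problem} plus a constant, with the minimization taken over the common feasible set cut out by the Dirac factors. Since maximizing the posterior is equivalent to minimizing its negative logarithm, and a positive scaling together with an additive constant leaves the minimizer unchanged, the two problems share the same optima. The only remaining care beyond the singular-covariance argument is bookkeeping the index shift between $\bm w_{t-1}$ and $\Delta\bm u_t$ and reconciling the scalar weight $Q_\varepsilon$ in the cost with its appearance inside the covariance $\bar{\bm R}$.
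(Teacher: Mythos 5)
Your proposal is correct and follows essentially the same route as the paper's proof: the same Bayes/Markov factorization of the posterior, the same term-by-term matching of the block-diagonal measurement density against the $\lVert \bm v_{\bm x,t}\rVert_{\bm R}^2$, $\lVert \bm v_{\bm u,t}\rVert_{\bm Q_{\bm u}}^2$, and $Q_\varepsilon\varepsilon_t^2$ penalties, and the same treatment of the rank-deficient $\bar{\bm Q}$, which you make explicit via Dirac factors times $\mathcal{N}(\Delta\bm u_t;\bm 0,\bm Q_{\Delta\bm u})$ where the paper instead invokes the disintegration theorem to collapse the degenerate Gaussian onto $p(\bm w_{t-1})$. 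Your added remarks on the index shift $\Delta\bm u_t=\bm w_{t-1}$ and on the Dirac factors enforcing the hard equality constraints are consistent with, and slightly more explicit than, the paper's argument.
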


\begin{proof}
Using Bayes' rule and the Markovian property of~\eqref{Virtual-NSS-model}, we have
\begin{align*}
\p(\bar {\bm x}_{k:k+H} | \bar {\bm y}_{k:k+H}, \bar{\bm x}_{k-1} ) &\propto   
  \prod_{t=k}^{k+H} \p(\bar {\bm y}_t | \bar{\bm{x}}_t) \p(\bar{\bm{x}}_t | \bar{\bm{x}}_{t - 1}),
\end{align*}
which implies
\begin{align*}
&\log \p(\bar {\bm x}_{k:k+H} | \bar {\bm y}_{k:k+H}, \bar{\bm x}_{k-1}) \\
 & \qquad \qquad \qquad \qquad = \sum_{t=k}^{k+H}  \log  \p(\bar {\bm y}_t | \bar{\bm{x}}_t)  
+ \log  \p(\bar {\bm x}_t | \bar{\bm{x}}_{t-1}).
\end{align*}
Given~\eqref{virtual-NSS-compact},  $\p(\bar {\bm y}_t | \bar{\bm{x}}_t)  \sim \mathcal{N}\left(\bar{\bm h} (\bar{\bm {x}}_t),  \bar{\bm{R}}\right)$, implying
\begin{align*}
\log  \p(\bar {\bm y}_t | \bar{\bm{x}}_t) \propto - \left\lVert \bar {\bm v}_t \right\rVert_{\bar{\bm R}}^2.
\end{align*}
Meanwhile, $\p(\bar {\bm x}_t | \bar{\bm{x}}_{t-1})  \sim \mathcal{N}\left(\bar{\bm f} (\bar{\bm {x}}_{t-1}),  \bar{\bm{Q}}\right)$, which is a degenerate Gaussian distribution as $\bar{\bm Q}$  is rank-deficient. By the disintegration theorem~\cite[Theorem 5.4]{Kallenberg:Springer:2021}, $\p(\bar {\bm x}_t | \bar{\bm{x}}_{t-1})$ collapses to a lower-dimensional distribution based on $p(\bm w_{t-1})$, leading to  
\begin{align*}  
\log  \p(\bar {\bm x}_t | \bar{\bm{x}}_{t-1}) \propto - \left\lVert   {\bm w}_{t-1} \right\rVert_{{\bm Q}_{\Delta \bm u}}^2,
\end{align*}
for $t=k,\ldots, k+H$.  

Putting together the above, we find that the cost function in~\eqref{MAP-problem} is given by
\begin{align*}
- \sum_{t=k}^{k+H} \left( \left\lVert \bar {\bm v}_t \right\rVert_{\bar{\bm R}}^2 + \left\lVert   {\bm w}_{t-1} \right\rVert_{{\bm Q}_{\Delta \bm u}}^2 \right),
\end{align*}
which is the opposite of the cost function in~\eqref{MHE-problem}.  The theorem is thus proven. 
\end{proof}

Theorem~\ref{theorem-MHE-MAP-equal} suggests the equivalence of the two estimation problems in a Gaussian setting, by showing that the maxima of~\eqref{MAP-problem} coincide with the minima of~\eqref{MHE-problem}. This key connection allows us to focus on tackling~\eqref{MAP-problem} subsequently. In general, it is not possible to find an analytical solution to~\eqref{MAP-problem}, but we can still develop computable approaches to perform the estimation. The estimation results then will make an approximate solution to~\eqref{MHE-problem} and, in turn, the original MPC problem in~\eqref{incremental-MPC}.


The  MAP estimation problem in~\eqref{MAP-problem} is known as a smoothing problem in estimation theory, which refers to the reconstruction of the past states using the measurement history. For a stochastic system, this is concerned with computing $\p(\bar {\bm x}_{k:k+H} | \bar {\bm y}_{k:k+H} , \bar{\bm x}_{k-1})$. Since $\bar{\bm x}_{k-1}$ merely denotes the initial condition, we omit it in the sequel for  notational conciseness and without loss of rigor. 
One can break down the computation of $\p(\bar {\bm x}_{k:k+H} | \bar {\bm y}_{k:k+H})$ into two passes~\cite{Sarkka:Cambridge:2013}. The first is the forward filtering pass, which is governed by  
\begin{align}\label{Bayesian-Forward-Filtering-Pass}
\p(\bar {\bm x}_{k:t} | \bar {\bm y}_{k:t} )  & \propto \p( \bar {\bm y}_t |  \bar {\bm x}_t ) \p( \bar {\bm x}_t |  \bar {\bm x}_{t-1} )  \p(\bar {\bm x}_{k:t-1} | \bar {\bm y}_{k:t-1} ),
\end{align}
for  $t=k, \ldots, k+H$.
This relation shows a recursive update from $\p(\bar {\bm x}_{k:t-1} | \bar {\bm y}_{k:t-1})$ to $\p(\bar {\bm x}_{k:t}  | \bar {\bm y}_{k:t})$.  Then, the  filtering distribution $\p(\bar {\bm x}_t | \bar {\bm y}_{k:t} )$ can be obtained by marginalizing out $\bm{\bar x}_{k:t-1}$. The smoothing pass follows the completion of the filtering, which starts from  $ \p(\bar {\bm x}_{k+H} | \bar {\bm y}_{k:k+H})$ and goes all the way back to $ \p(\bar {\bm x}_k | \bar {\bm y}_{k:k+H})$. This is done through  the backward recursion based on
\begin{align} \label{Bayesian-Backward-Smoothing-Pass}\nonumber
 \p(\bar {\bm x}_{t:k+H} | \bar {\bm y}_{k:k+H}) &= \p(\bar {\bm x}_t | \bar {\bm x}_{t+1} , \bar {\bm y}_{k:t} ) \\
 &\quad\quad\cdot \p(\bar {\bm x}_{t+1:k+H} | \bar {\bm y}_{k:k+H}  ),
\end{align}
for $t=k+H-1, \ldots, k$.  Marginalizing 
out $\bar {\bm x}_{t+1:k+H}$  from $\p(\bar {\bm x}_{t:k+H} | \bar {\bm y}_{k:k+H})$ will lead to the smoothing distribution $\p(\bar{\bm x}_{t} | \bar{\bm y}_{k:k+H})$. 
After the smoothing, we can determine $\hat{\bar {\bm x}}_{k}^*$ based on minimum-variance unbiased estimation~\cite{Anderson:Prentice:1979}:  
\begin{align}\label{MVUE}
\hat{\bar {\bm x}}_{k}^* = \int \bar{\bm x}_k  \p(\bar{\bm x}_{k} | \bar{\bm y}_{k:k+H}) d \bar{\bm x}_{k}. 
\end{align} 
This two-pass procedure will run repeatedly at subsequent times in a receding-horizon fashion.

Based on the above, we summarize the MPIC framework in Algorithm~\ref{alg:MPIC}.  Characteristically, the framework exploits  Bayesian inference based on forward-filtering/backward-smoothing to implement MPC, in a shift away from gradient-based numerical optimization. 
The rich history of research on Bayesian estimation  provides a source of inspirations and insights to handle the MPIC problem for NSS models. Especially, sequential Monte Carlo sampling or 
particle filtering has shown to be powerful for nonlinear state estimation problems, thus holding a strong potential for executing the MPIC framework. Our next focus will be on developing a new particle filter/smoother that presents high computational performance to enable fast MPIC of NSS models. 

\begin{algorithm}[t]
	\caption{The MPIC Framework} \label{alg:MPIC}
	\begin{algorithmic}[1]
 	\State Formulate the MPC problem in~\eqref{incremental-MPC} 
	\State  Set up the virtual system in~\eqref{virtual-NSS-compact}
		\For {$k=1,2,\ldots$} 

\vspace{5pt}

\item[]{\hspace{13pt}\em \color{gray} //  Forward filtering}

			\For {$t=k,k+1,\ldots,k+H$}
				\State Compute $\p(\bar {\bm x}_t | \bar {\bm y}_{k:t}  )$ via~\eqref{Bayesian-Forward-Filtering-Pass}
				 
			\EndFor
 
\vspace{5pt}

\item[]{\hspace{13pt}\em \color{gray} //   Backward smoothing}
 
			\For {$t=k+H-1,k+H-2,\ldots,k$}
				\State Compute $\p(\bar {\bm x}_{t} | \bar {\bm y}_{k:k+H})$ via~\eqref{Bayesian-Backward-Smoothing-Pass}				 
			\EndFor

\vspace{5pt}

\State Compute $\hat{\bar {\bm x}}_{k}^*$ via~\eqref{MVUE}, and apply control $\hat{{\bm u}}_{k}^*$ 
		\EndFor
	\end{algorithmic} 
\end{algorithm}

\section{Implicit Particle Filtering \& Smoothing for MPIC}
\label{Sec:Kalman-IPF/S}

Particle filtering traces its roots to the Monte Carlo simulation. At its core, this approach approximates the posterior probability distribution of a system's state using statistical samples, i.e., particles, and recursively updates the particles using the principle of sequential importance sampling as the state evolves. With the sampling-based nature, particle filtering has great power in approximating complex distributions and searching the state space to make accurate state estimation for highly nonlinear systems. Its capacity is inherently valuable here, and we will leverage particle filtering as well as its sibling, particle smoothing, to approximately implement the MPIC framework.

Particle filtering/smoothing, however, may incur non-trivial computational burden, because one often must use a great number of particles to adequately approximate the target distribution---otherwise, the filtering/smoothing will fail as a majority of the particles might have zero weights in a phenomenon known as particle degeneracy and impoverishment.  In our prior study~\cite{Askari:Auto:2022}, we proposed an implicit particle filtering approach, which shows effectiveness in overcoming the issue. Here, we expand the work in~\cite{Askari:Auto:2022}  by developing implicit particle smoothing to ensure fast computation in the execution of the MPIC framework. For the sake of completeness, we will show a systematic derivation in this section. We will first explain implicit importance sampling and then based on the concept, illustrate how to perform particle filtering/smoothing as banks of Kalman filters/smoothers. Finally, we will present the resulting implementation of MPIC.

\subsection{Implicit Importance Sampling}\label{Sec:IMS}

Let us begin with introducing importance sampling, which is the basis for particle filtering/smoothing. The method is intended to solve a fundamental problem  in Bayesian estimation, which is   evaluating 
\begin{align}\label{Expectation-equation}
\mathbb{E} \left[ \bm g (\bm  x)  \; | \;  \bm y \right] = \int \bm g (\bm  x) \p(\bm x | \bm y ) d \bm x,
\end{align}
where $\bm x$ is a random vector, $\bm y$ is the evidence or observation of $\bm x$,   $\bm g(\cdot)$ is a given function, and $\mathbb{E}$ is the expectation operator. Here, we slightly abuse the notation without causing confusion. The target distribution $\p(\bm x | \bm y)$ is too complex to defy sampling in many cases. So instead we pick an easy-to-sample probability distribution $q(\bm x)$, called the importance distribution, to draw particles $\left\{ \bm x^i, i=1,2,\ldots, N \right\}$. Using the particles, we can obtain an empirical approximation  of~\eqref{Expectation-equation} by
\begin{align}\label{IS-equation}
\mathbb{E} \left[ \bm g (\bm  x)  \; | \;  \bm y \right] \approx \sum_{i=1}^N w^i \bm g (\bm  x^i), \quad w^i \propto \frac{1}{N} \frac{ \p(\bm x^i | \bm y ) }{q(\bm x^i)},
\end{align}
where $w^i$ is the so-called importance weight. 
This technique is known as importance sampling. Despite its utility, low approximation accuracy will result if the particles fall in the low-probability regions of  $\p(\bm x | \bm y)$ when    $q(\bm x)$ does not align well with   $\p( \bm x | \bm y) $. The effects would carry over to particle filtering to underlie the aforementioned issue of particle degeneracy. Consequently, one must use a large, sometimes gigantic, number of particles for satisfactory accuracy at the expense of computational time.

Implicit importance sampling is motivated to remedy the situation. It aims to draw particles such that they lie in the high-probability regions of $\p(\bm x | \bm y)$---when highly probable particles can be found, one can use just fewer of them to achieve not only better approximation accuracy, but also lower computational costs. We introduce a reference probability distribution $p(\bm \xi)$, where the random vector $\bm \xi$ has the same dimension as $\bm x$. Note that $p(\bm \xi)$ must have the same support as $\p(\bm x | \bm y)$, be amenable to sampling, and have well-defined high-probability regions. We now seek to align the high-probability region of $\p(\bm x | \bm y)$ with that of $p(\bm \xi)$. To this end, we define $F(\bm x) = - \log \p(\bm x | \bm y) $ and $\varphi (\bm \xi) =  - \log p(\bm \xi)$, and let
\begin{align}\label{IIS-Equality}
F(\bm x) - \min  F(\bm x) = \varphi (\bm \xi) - \min \varphi (\bm \xi).
\end{align}
Here, $\min  F(\bm x) $ and $\min \varphi(\bm \xi)$ correspond to the highest-probability points of $\p(\bm x | \bm y)$  and $p(\bm \xi)$, respectively. Suppose that a one-to-one mapping $\bm x = \Gamma(\bm \xi)$ solves the algebraic equation~\eqref{IIS-Equality}. Then, we can use the mapping to get the particles $\bm x^i = \Gamma(\bm \xi^i)$, where $\bm x^i$ is highly probably if $\bm \xi^i$ is taken from the highly probable region of $p(\bm \xi)$. 
As shown in~\cite{Askari:Auto:2022}, the importance weight of $\bm x^i$ is
\begin{align*} w^i &\propto \left|\frac{ d \Gamma(\bm \xi^i )  }{d \bm \xi^i}\right|\cdot \exp\left[ -  \min  F(\bm x) + \min \varphi (\bm \xi)  \right].
\end{align*} 
Using the particles along with their weights, we can evaluate $\mathbb{E} \left[ \bm  {g}  (\bm x)  \; | \;  \bm y \right]$ along similar lines in~\eqref{IS-equation}. 

A computational bottleneck for implicit importance sampling is finding $\min F(\bm x)$ and solving~\eqref{IIS-Equality}, especially when $F(\bm x)$ is a nonlinear nonconvex function. However,~\eqref{IIS-Equality}   will admit a closed-form, easy-to-compute expression of $\Gamma(\bm \xi)$ in the Gaussian case. Specifically,
if $\bm \xi \sim \mathcal{N} (\bm 0, \bm I)$ and 
\begin{align*}
\left[
\begin{matrix}
\bm x \cr \bm y
\end{matrix}
\right] \sim \mathcal{N} \left(
\left[
\begin{matrix}
\bm m^{\bm x} \cr \bm m^{\bm y}
\end{matrix}
\right],
\left[
\begin{matrix}
\bm P^{\bm x}  & \bm P^{\bm x \bm y} \cr  \bm \left(\bm P^{\bm x \bm y}\right)^\top  & \bm P^{\bm y}
\end{matrix}
\right]
\right),
\end{align*}
then  $\p(\bm x | \bm y) \sim \mathcal{N}\left(\tilde{\bm m}, \tilde{\bm P} \right)$, where 
\begin{align*}
\tilde {\bm m}  &= \bm m^{\bm x} +  \bm P^{\bm x \bm y}  \left(\bm  P^{\bm y}\right)^{-1} \left( \bm y -  \bm m^{\bm y }\right),\\
\tilde{\bm P} &= \bm P^{\bm x}  - \bm P^{\bm x  \bm y}   \left(\bm  P^{\bm y}\right)^{-1} \left(  \bm P^{\bm x  \bm y}   \right)^\top, \\
\Gamma(\bm \xi)  &= \tilde {\bm m} + \sqrt{\tilde {\bm P}} \bm \xi.
\end{align*}
Based on the explicit form of $\Gamma(\bm \xi)$, we can create the particles $\bm x^i$ and determine their importance weights $\bm w^i$ as
\begin{align*}
\bm x^i = \tilde {\bm m} + \sqrt{\tilde {\bm P}} \bm \xi^i, \quad \bm w^i = {1 \over N},
\end{align*}
where  $\bm \xi^i$ for $ i=1, \ldots, N$  are drawn from the high-probability region of $\mathcal{N}( \bm 0, \bm I)$. 
Then,~\eqref{Expectation-equation} can be evaluated as
\begin{align*}
\mathbb{E} \left[ \bm g (\bm  x)  \; | \;  \bm y \right]  \approx  {1 \over N } \sum_{i=1}^N \bm g (\bm x^i) .
\end{align*}
It is worth highlighting that implicit importance sampling is computationally fast to execute  in the Gaussian case, due to the expedient particle generation. This result is much useful as it will allow us to implement implicit particle filtering/smoothing later as banks of nonlinear Kalman filters/smoothers to enhance the computational efficiency considerably. 

 Next, we exploit implicit importance sampling to develop a new particle filtering/smoothing approach. By design, this approach will allow  to use  few yet highly probable  particles and fast update them in a Gaussian setting. As such, it has high accuracy and computational speed while effectively mitigating the particle degeneracy issue.

\subsection{Implicit Particle Filtering via  a Bank of Kalman Filters} \label{Sec:Kalman-IPF}


Given the system in~\eqref{virtual-NSS-compact},     suppose that  we have obtained the particles along with their importance weights
$\left\{ \bar{\bm x}_{k:t-1}^{f,i}, w_{k:t-1}^{f,i}, \,  i=1, \ldots, N \right\}$ at time $t-1$ to form 
an empirical distribution that approximates $\p(\bm{\bar x}_{k:t-1} | \bm{\bar y}_{k:t-1})$:
\begin{align*}
\p(\bar{\bm x}_{k:t-1} | \bar{\bm y}_{k:t-1}) \approx  \sum_{i=1}^N w_{k:t-1}^{f,i} \delta \left(\bar{\bm x}_{k:t-1} - \bar{\bm  x}_{k:t-1}^{f,i} \right),
\end{align*}
where $f$ in the superscripts refers to filtering. Then, as shown in~\cite{Chorin:CAMCS:2010},~\eqref{Bayesian-Forward-Filtering-Pass}   reduces  to 
\begin{align}\label{IPF-Bayesian-filtering}
\p(\bm{ \bar x}_t | \bm{\bar y}_{k:t}) \propto & \, \p(\bm{\bar y}_t | \bm{\bar x}_t)\p(\bm{\bar x} | \bm{\bar x}_{t-1}) \p(\bm{\bar x}_{t-1} | \bm{\bar y}_{k:t-1}).
\end{align}

The key question now is how to identify the particles $\bar {\bm x}_t^{f, i}$ at time $t$ using~\eqref{IPF-Bayesian-filtering} and ensure that  $\bar {\bm x}_t^{f, i}$ will lie in the high-probability region of $\p(\bar{\bm{x}}_{t}  | \bar{\bm{y}}_{k:t})$. The implicit importance sampling principle tells us that we can draw highly probable particles of a reference random vector $\bm \xi_t$ and then map them to $\bar {\bm x}_t $. Following~\eqref{IIS-Equality},  the implicit mapping, denoted as $\Gamma^f: \bm \xi_t \rightarrow \bar {\bm x}_t$, can be made by solving 
\begin{align}\label{IPF-IIS-Equality}
F^f  \left( \bar {\bm x}_t  \right) - \min F^f  \left( \bar {\bm x}_t \right) = \varphi (\bm \xi_t) - \min \varphi (\bm \xi_t),
\end{align}
where $F^f  \left( \bar {\bm x}_t  \right) = - \log \p(\bar{\bm{x}}_t | \bar{\bm{y}}_{k:t})$ and $\varphi (\bm \xi_t) = - \log  p(\bm \xi_t) $.

We wish to construct an explicitly expressed  $\Gamma^f(\bm \xi_t)$ to bypass the tedious computation   to solve~\eqref{IPF-IIS-Equality}. This is usually impossible but can be achieved in the Gaussian case, as hinted
 in Section~\ref{Sec:IMS}. 
Our pursuit of the idea goes as below.
Because   
\begin{align*}
\p( \bar {\bm y}_t | \bar{\bm{x}}_{t} , \bar{\bm{x}}_{t-1} ) = \p( \bar {\bm y}_t | \bar{\bm{x}}_{t}),
\end{align*}
it follows from~\eqref{IPF-Bayesian-filtering} and   Bayes' theorem that
\begin{align}\label{IPF-Bayesian-filtering-alternative}
\p(\bar{\bm{x}}_{t} | \bar{\bm{y}}_{k:t}) \propto \p(\bar{\bm{x}}_{t} | \bar{\bm{y}}_t,  \bar{\bm{x}}_{t-1} ) \p( \bar{\bm{y}}_t  |  \bar{\bm{x}}_{t-1} ). 
\end{align}
Let us introduce the following Gaussian distribution approximation locally around  $\bar{\bm{x}}_{t-1}^{f,i}$:
\begin{align*}
\left.
\left[
\begin{matrix}
\bar{\bm{x}}_{t}  \cr \bar{\bm y}_t
\end{matrix}
\right] \, \right| \,
 \bar{\bm{x}}_{t-1} 
&\sim \mathcal{N} \left(
\left[
\begin{matrix}
\bm m^{f,i}_t \cr \hat {\bar{ \bm y}}_t^i
\end{matrix}
\right],
\left[
\begin{matrix}
\bm P^{\bm x, f, i}_t  &    \bm P^{ \bm x \bm y, f, i}_t \cr  \left( \bm P^{ \bm x \bm y, f, i}_t\right)^\top  &  \bm P^{  \bm y, f, i}_t 
\end{matrix}
\right]
\right),
\end{align*}
where it is assumed that $ {\bm m}_t^{f,i}$, ${\bm P}_t^{\bm x, f,i}$, $\hat {\bar {\bm y}}_t^{i}$, ${\bm P}_t^{  \bm y, f,i}$, and ${\bm P}_t^{\bm x \bm y, f,i} $ can be constructed from $\bar {\bm x}_{t-1}^{f,i}$. Note that this approximation is local rather than global, without compromising the resulting particle filter's capability to handle non-Gaussian distributions.
It then follows  that
\begin{align}\label{IPF-Posterior-Prob-Update}
\left.  \bar{\bm{x}}_{t}  \, \right| \, \bar{\bm{y}}_{t}, \bar {\bm x}_{t-1}^{f,i} \sim \mathcal{N} \left( \tilde {\bm m}_t^{f, i}, \tilde {\bm P}_t^{\bm x, f,i} \right),
\end{align} 
where
\begin{subequations}\label{IPF-Kalman-Update}
\begin{align}\label{IPF-Kalman-Update-Mean}
\tilde {\bm m}_t^{f, i} &=   {\bm{m}}_{t}^{f,i}  + \bm K_t^{f,i} \left( \bar{\bm y}_t - \hat{\bar{\bm y}}_t^i \right),\\ \label{IPF-Kalman-Update-Cov}
\tilde {\bm P}_t^{\bm x, f, i} &= {\bm P}_t^{\bm x, f, i} -  \bm K_t^{f,i}  \bm P^{  \bm y, f, i}_t \left( \bm K_t^{f,i}\right)^\top,\\ \label{IPF-Kalman-Update-Gain}
 \bm K_t^{f,i}&= \bm P^{ \bm x \bm y, f, i}_t \left(  \bm P^{  \bm y, f, i}_t \right)^{-1}. 
\end{align}
\end{subequations}
From~\eqref{IPF-Bayesian-filtering-alternative}-\eqref{IPF-Posterior-Prob-Update},   a local approximation for $\p(\bar{\bm{x}}_{t} | \bar{\bm y}_{k:t} )$ would emerge as  $\p(\bar{\bm{x}}_{t} | \bar{\bm y}_{k:t} )\propto  \mathcal{N} \left( \tilde {\bm m}_t^{f, i}, \tilde {\bm P}_t^{\bm x, f,i} \right)$. 
If further letting $\bm \xi_t  \sim \mathcal{N}(\bm 0, \bm I)$,    an explicit mapping will arise to solve~\eqref{IPF-IIS-Equality}  for $i = 1,\ldots, N$, which is
\begin{align}\label{IPF-particle-perturb}
\bar{\bm{x}}_{t} = \Gamma^f(\bm \xi_t) =  \tilde  {\bm{m}}_{t}^{f,i} + \sqrt{\tilde {\bm P}_t^{\bm x, f, i}}  \bm \xi_t ,
\end{align}
whereby  one can draw a highly probable particle $\bm \xi_t^i$ from $p(\bm \xi_t)$ and then compute $\bar{\bm{x}}_{t}^{f,i} $.

\begin{figure*}
\centering
\vspace{-2mm}
\begin{minipage}{.75\linewidth}
\begin{algorithm}[H]
\caption{The Kalman-IPF Algorithm} \label{alg:Kalman-IPF}
	\begin{algorithmic}[1]
 	\State Initialize the particles $\bar{\bm x}_k^{f,i}$ for $i=1,\ldots, N$ with     $\bm \Sigma_k^{f,i}$ and $w_k^{f,i}$ 
  
			\For {$t= k+1, k+2, \ldots,k+H$}

				\For {$i=1,2,\ldots,N$}

				\State Run Kalman filtering prediction   
\begin{align*}
\left[  {\bm m}_t^{f,i}, {\bm P}_t^{\bm x, f,i}, \hat {\bar {\bm y}}_t^{i}, {\bm P}_t^{  \bm y, f,i},  {\bm P}_t^{\bm x \bm y, f,i}  \right]=\texttt{KalmanPredict}\left( \bar {\bm x}_{t-1}^{f,i},  {\bm \Sigma}_{t-1}^{f,i} \right)
\end{align*}

				 \State  Run  Kalman filtering update  via~\eqref{IPF-Kalman-Update} 
\begin{align*}
\left[ \tilde {\bm m}_t^{f,i},  \tilde {\bm P}_t^{\bm x, f,i} \right]=\texttt{KalmanUpdate}\left( {\bm m}_t^{f,i}, {\bm P}_t^{\bm x, f,i}, \hat {\bar {\bm y}}_t^{i}, {\bm P}_t^{  \bm y, f,i},  {\bm P}_t^{\bm x \bm y, f,i} \right)
\end{align*} 

				\State   Compute $\bar{\bm{x}}_{t}^{f,i}$,  $\bm \Sigma_t^{f,i}$, and $w_t^{f,i}$ via~\eqref{IPF-particle-perturb}-\eqref{IPF-particle-cov} 
 				
				\EndFor
 			\State Do resampling if necessary
			\State Output the filtered state estimate
			\EndFor
  
	\end{algorithmic} 
\end{algorithm}
\end{minipage}
\end{figure*}

As  $\bar{\bm x}_t = \Gamma^f(\bm \xi_t)$ is one-on-one, the importance weight of $\bar{\bm x}_t^{f,i}$ is given by
\begin{align*}
w_{t}^{f,i}   \propto w_{t-1}^{f,i} \cdot \left|\frac{ d \Gamma^f(\bm \xi_t^i )  }{d \bm \xi_t^i}\right|   \cdot \exp\left[ -  \min  F^f  \left( \bar {\bm x}_t^{f,i} \right) + \min \varphi (\bm \xi_t^i)  \right]. 
\end{align*} 
This, along with~\eqref{IPF-Bayesian-filtering-alternative}, leads to   the normalized importance weight of $\bar{\bm{x}}_{t}^{f,i}$ computed by
\begin{align} \label{IPF-particle-weight}
w_t^{f,i} = \frac{w_{t-1}^{f,i} \p(\bar {\bm y}_t | \bar{\bm{x}}_{t-1}^{f,i})}{\sum_{j=1}^N w_{t-1}^{f,j} \p(\bar {\bm y}_t | \bar{\bm{x}}_{t-1}^{f,j})}, \ \ \bar {\bm y}_t \, | \, \bar{\bm{x}}_{t-1}^{f,i} \sim \mathcal{N} \left(\hat {\bar{ \bm y}}_t^i, \bm P^{  \bm y, f, i}_t   \right).
\end{align}
As $\bar {\bm x}_t^{f, i}$ is   considered as drawn from $\p(\bar{\bm{x}}_{t} | \bar{\bm{y}}_{k:t})$, it has an associated  covariance ${\bm \Sigma}_t^{f, i} $ that is equal to  $\tilde {\bm P}_t^{\bm x, f, i}$, i.e., 
\begin{align}\label{IPF-particle-cov}
  {\bm \Sigma}_t^{f, i}  = \tilde {\bm P}_t^{\bm x, f, i}.
\end{align}

In the above, what draws our attention is that~\eqref{IPF-Kalman-Update-Mean}-\eqref{IPF-Kalman-Update-Gain} is identical to the well-known Kalman update formulae. This indicates that the particle update can be done here by a nonlinear Kalman filter, and going further, one can use a bank of $N$ nonlinear Kalman filters running in parallel to implement implicit particle filtering. In this implementation, every particle $\bar {\bm x}_{t-1}^{f,i}$ is propagated forward  by a Kalman filter, first to ${\bm m}_t^{f,i}$ by prediction (time-update) and  then to $\tilde {\bm m}_t^{f,i}$ by update (measurement-update)  as in~\eqref{IPF-Kalman-Update};  then, $\tilde  {\bm{m}}_{t}^{f,i}$ will transform to $\bar {\bm x}_t^{f,i}$ with the addition of $ {\bm{\xi}}_{t}^i$. The implementation, referred to as Kalman-IPF, is summarized in Algorithm~\ref{alg:Kalman-IPF}, in which we use {\tt KalmanPredict} and {\tt KalmanUpdate} to represent the prediction-update procedure that is characteristic of Kalman filtering. The Kalman-IPF algorithm will not only inherit the merit of implicit particle filtering in using fewer but higher-quality particles to attain more accurate estimation, but also accelerate the search and determination of such particles. This makes it capable of offering both high accuracy and fast computation.


\begin{remark}
As  mentioned at the beginning of this section,  particle filtering is beset by the issue of particle degeneracy. A commonly used remedy is resampling, which redraws particles based on the discrete empirical distribution formed by the current set of particles to decrease the presence of those particles  with low weights. The Kalman-IPF algorithm is much less vulnerable to the issue due to its capability of finding out highly probable particles. However, we still recommend to include resampling in case particle degeneracy appears.
\end{remark}

\subsection{Implicit Particle Smoothing via  a Bank of Kalman Smoothers}\label{Sec:Kalman-IPS}

Now, we attempt to build implicit particle smoothing upon the notion of implicit importance sampling.  

We start with the backward Bayesian smoothing principle in~\eqref{Bayesian-Backward-Smoothing-Pass}  and assume that $\p(\bar {\bm x}_{t+1} | \bar {\bm y}_{k:k+H})$  is available at time $t+1$ by the approximated empirical distribution of the particles $\left\{ \bar{\bm x}_{t+1}^{s,i}, w_{t+1}^{s,i}, i=1,\ldots,N\right\}$, where the superscript $s$ refers to smoothing.  By~\eqref{Bayesian-Backward-Smoothing-Pass}, when a particle moves from $\bar{\bm x}_{t+1}^{s,i}$  to $\bar{\bm x}_{t}^{s,i}$, we only need to consider 
\begin{align} \label{IPS-Bayesian-Smoothing} 
 \p(\bar {\bm x}_{t} | \bar {\bm y}_{k:k+H}  ) \propto & \, \p(\bar {\bm x}_t | \bar {\bm x}_{t+1}, \bar {\bm y}_{k:t} )   \p(\bar {\bm x}_{t+1} |  \bar {\bm y}_{k:k+H}  ).
\end{align}
Since $\bar {\bm x}_{t}^{s,i}$ is desired to fall into the high-probability regions of $ \p(\bar {\bm x}_{t} | \bar {\bm y}_{k:k+H})$, we want to find a mapping $\Gamma^s: \bm \xi_t \rightarrow \bar {\bm x}_{t}$   to project highly probable values of the reference sample $\bm \xi_t^i$ to highly probable values of $\bar {\bm x}_{t}^{s,i}$.   Based on implicit importance sampling, we let
\begin{align}\label{IPS-IIS-Equality} 
F^s  \left( \bar {\bm x}_t  \right) - \min F^s \left( \bar {\bm x}_t \right) = \varphi (\bm \xi_t) - \min \varphi (\bm \xi_t),
\end{align}
where  $F^s  \left( \bar {\bm x}_t \right) = - \log \p(\bar{\bm{x}}_{t}  | \bar{\bm{y}}_{k:k+H} )$, and $\varphi (\bm \xi_t)$ follows the same definition in~\eqref{IPF-IIS-Equality}. 
As before, an explicit expression for $\bar{\bm x}_t   = \Gamma^s (\bm \xi_t)$ will be sought after to solve~\eqref{IPS-IIS-Equality}. To achieve this, we impose the following local Gaussian approximations: 
\begin{align*}
\left.
\left[
\begin{matrix}
\bar{\bm{x}}_{t}  \cr \bar{\bm{x}}_{t+1} 
\end{matrix}
\right] \, \right| \,
 \bar{\bm{y}}_{k:t}
&\sim \mathcal{N} \left(
\left[
\begin{matrix}
\bar{\bm x}^{f,i}_t \cr  {\bm m}^{f,i}_{t+1}
\end{matrix}
\right],
\left[
\begin{matrix}
\bm \Sigma^{f, i}_t  &    \bm P^{ \bm x  , f, i}_{t,t+1} \cr  \left( \bm P^{ \bm x, f, i}_{t,t+1}\right)^\top  &  \bm P^{\bm x, f, i}_{t+1} 
\end{matrix}
\right]
\right),\\
\left. \bar{\bm{x}}_{t+1}  \, \right| \,  \bar{\bm{y}}_{k:k+H} &\sim 
\mathcal{N} \left(\bar{\bm{x}}_{t+1}^{s,i}, \bm \Sigma_{t+1}^{s,i} \right).
\end{align*}
Inserting the above Gaussian distributions into~\eqref{IPS-Bayesian-Smoothing}, we have
\begin{align}\label{IPS-Posterior-Prob-Update}
\bar {\bm x}_t \, | \, \bar {\bm y}_{k:k+H} \sim \mathcal{N} \left(
\tilde {\bm m}_t^{s,i}, \tilde {\bm P}_t^{\bm x, s,i}
\right),
\end{align}
where
\begin{subequations}\label{IPS-Kalman-Update}
\begin{align}\label{IPS-Kalman-Update-Mean}
\tilde {\bm m}_t^{s, i} &=   \bar{\bm{x}}_{t}^{f,i}  +  \bm K_t^{s,i}  \left( \bar{\bm{x}}_{t+1}^{s,i} -  {\bm{m}}_{t+1}^{f,i} \right),\\ \label{IPS-Kalman-Update-Cov}
\tilde {\bm P}_t^{\bm x, s, i} &= {\bm \Sigma}_t^{f, i} + \bm K_t^{s,i} \left(  \bm \Sigma_{t+1}^{s,i} -  \bm P_{t+1}^{\bm x, f,i}  \right) \left(\bm K_t^{s,i}\right)^\top,\\
\bm K_t^{s,i} &= \bm P^{ \bm x  , f, i}_{t,t+1}   \left(  \bm P^{ \bm x  , f, i}_{t+1}  \right)^{-1} . 
\end{align}
\end{subequations}
Then, combining~\eqref{IPS-Posterior-Prob-Update} with~\eqref{IPS-IIS-Equality} readily gives
\begin{align}\label{IPS-particle-perturb}
\bar{\bm{x}}_{t} = \Gamma^s(\bm \xi_t ) =  \tilde  {\bm{m}}_{t}^{s,i} + \sqrt{\tilde {\bm P}_t^{\bm x, s, i}}  \bm \xi_t ,
\end{align}
when $\bm \xi_t \sim \mathcal{N}(\bm 0, \bm I)$. Drawing a highly probably particle $\bm \xi_t^i$ for all $i=1,\ldots,N$, we can quickly compute $\bar{\bm x}_t ^{s,i}$ using~\eqref{IPS-particle-perturb} and obtain the associated covariance as
\begin{align}
\bm \Sigma_t^{s,i} = \tilde {\bm P}_t^{\bm x, s, i}. 
\end{align}
For the backward smoothing, the importance weight of $\bar {\bm x}_t^{s,i}$ is
\begin{align*}
w_t^{s,i} \propto \left|\frac{ d \Gamma^s(\bm \xi_t^i )  }{d \bm \xi_t^i}\right|   \cdot \exp\left[ -  \min  F^s  \left( \bar {\bm x}_t^{s,i} \right) + \min \varphi (\bm \xi_t)  \right], 
\end{align*} 
which suggests that $w_t^{s,i} = w_t^{s,j}$ for any $i$ and $j$ due to~\eqref{IPS-particle-perturb}. Hence, the normalized weight is
\begin{align}\label{IPS-weight}
w_t^{s,i} = {1 \over N}. 
\end{align}
The equal weights here indicate that all the particles have the same importance. Hence, the backward smoothing pass is free from the particle degeneracy issue, making resampling unnecessary in this pass. 
\begin{figure*}
\centering
\vspace{-2mm}
\begin{minipage}{.85\linewidth}
\begin{algorithm}[H]
	\caption{The Kalman-IPS Algorithm} \label{alg:Kalman-IPS}
	\begin{algorithmic}[1]
 	\State Obtain the particles $\bar{\bm x}_{k+H}^{s,i}$ for $i=1,\ldots, N$ with     $\bm \Sigma_{k+H}^{s,i}$ and $w_{k+H}^{s,i}$  from executing the Kalman-IPF algorithm
  
			\For {$t= k+H, k+H-1, \ldots,k$}

				\For {$i=1,2,\ldots,N$} 

				 \State  Run  Kalman smoothing update  via~\eqref{IPS-Kalman-Update}
\begin{align*}
\left[ \tilde {\bm m}_t^{s,i},  \tilde {\bm P}_t^{s,i} \right]=\texttt{RTSKalmanSmooth}\left( \bar{\bm{x}}_{t+1}^{s,i}, \bm \Sigma_{t+1}^{s,i},  \bar {\bm x}_t^{f,i}, {\bm \Sigma}_t^{ f,i}, \bm m_{t+1}^{f,i}, \bm {\bm P}_{t+1}^{\bm x, f,i}, \bm {\bm P}_{t,t+1}^{\bm x, f,i} \right)
\end{align*}


				\State   Compute $\bar{\bm{x}}_{t}^{s,i}$,  $\bm \Sigma_t^{s,i}$, and $w_t^{s,i}$ via~\eqref{IPS-particle-perturb}-\eqref{IPS-weight}
 				
				\EndFor
			\State Output the smoothed state estimate
			\EndFor
  
	\end{algorithmic} 
\end{algorithm}
\end{minipage}
\end{figure*}

\begin{figure}
\centering
\vspace{-2mm}
\begin{minipage}{\linewidth}
\begin{algorithm}[H]
\fontsize{9}{10}
  \caption{The Unscented Transform ($\mathcal{UT}$)}\label{alg:UncentedTransform}
  \begin{algorithmic}[1]
\Inputs{$\bm y = \gamma (\bm x)+\bm w$, $\bm{m}^{\bm x}$, $\bm P^{\bm x}$, $\bm Q$} 
\State Generate the sigma points  
\begin{align*}
\bm{x}^0 &= \bm m^{\bm x}\\
\bm{x}^i &= \bm m^{\bm x} + \sqrt{\alpha}\left[\sqrt{\bm P^{\bm x}}\right]_i \, , \, i = 1,2,\ldots,n\\
\bm{x}^{i+n} &= \bm m^{\bm x} - \sqrt{\alpha}\left[\sqrt{\bm P^{\bm x}}\right]_i \, , \, i = 1,2,\ldots,n
\end{align*}
\State Pass the sigma points through $\gamma(\cdot)$
\begin{align*}
\bm y^i &= \gamma(\bm x^i)\, , \, i=0,1,\ldots,2n 
\end{align*}
\State Compute $\bm m^{\bm y}$,  $\bm P^{\bm y}$, and  $\bm P^{\bm x \bm y}$
\begin{align*}
\bm m^{\bm y} &= \sum_{i=0}^{2n} W_m^i \bm  y^i\\
\bm P^{\bm y} &= \sum_{i=0}^{2n} W_c^i \left(\bm  y^i - \bm m^{\bm y} \right)\left(\bm  y^i - \bm m^{\bm y} \right)^\top + \bm Q\\
\bm P^{\bm x \bm y} &= \sum_{i=0}^{2n} W_c^i \left(\bm  x^i - \bm m^{\bm x} \right)\left(\bm  y^i - \bm m^{\bm y} \right)^\top
\end{align*} 
\Output{$\bm m^{\bm y}$, $\bm P^{\bm y}$, $\bm P^{\bm x\bm y}$}
\end{algorithmic}
\end{algorithm}
\vspace{-4mm}
{\footnotesize 
\textsuperscript{*}See~\cite{Fang:JAC:2018} for the definitions of $\alpha$, $W_m^i$, and $W_c^i$.}
\end{minipage}
\end{figure}

The backward update in~\eqref{IPS-Posterior-Prob-Update}-\eqref{IPS-Kalman-Update} is identical to the Rauch-Tung-Striebel (RTS) Kalman smoothing. This inspires  using  a bank of nonlinear Kalman smoothers to implement implicit particle smoothing, with the smoothers running concurrently to update the individual particles. We summarize the procedure in Algorithm~\ref{alg:Kalman-IPS}  and call it Kalman-IPS. In the algorithm, {\tt RTSKalmanSmooth} represents the RTS backward update. 

\begin{figure*}
\centering
\vspace{-2mm}
\begin{minipage}{.85\linewidth}
\begin{algorithm}[H]
	\caption{The MPIC-X Algorithm} \label{alg:MPIC-X}
	\begin{algorithmic}[1]
 	\State Formulate the MPC problem in~\eqref{incremental-MPC} 
	\State  Set up the virtual system in~\eqref{Virtual-NSS-model}
		\For {$k=1,2,\ldots$} 

\vspace{5pt}

\item[]{\hspace{13pt} \em \color{gray}//  Forward filtering by Kalman-IPF}

\State Initialize the filtering particles $\bar{\bm x}_k^{f,i}$ for $i=1,\ldots, N$ with     $\bm \Sigma_k^{f,i}$ and $w_k^{f,i}$ 
			\For {$t=k,k+1,\ldots,k+H$}
				\For {$i=1,2,\ldots,N$}
					\State Run {\tt KalmanPredict} by $\mathcal{UT}$
                    \begin{align*}
                    \left[  {\bm m}_t^{f,i}, {\bm P}_t^{\bm x, f,i}, {\bm P}_{t-1,t}^{\bm x, f,i}  \right]&=\mathcal{UT}\left( \bar{\bm f}, \bar {\bm x}_{t-1}^{f,i},  {\bm \Sigma}_{t-1}^{f,i}, \bar{\bm Q} \right) \\
                    \left[ \hat {\bar {\bm y}}_t^{i}, {\bm P}_t^{  \bm y, f,i},  {\bm P}_t^{\bm x \bm y, f,i}  \right] &=\mathcal{UT} \left( \bar {\bm h},  {\bm m}_{t}^{f,i},  {\bm P}_{t}^{\bm x, f,i}, \bar {\bm R} \right) 
					\end{align*}

				\State Run {\tt KalmanUpdate} 
\begin{align*}
\tilde {\bm m}_t^{f, i} &=   {\bm{m}}_{t}^{f,i}  + \bm K_t^{f,i} \left( \bar{\bm y}_t - \hat{\bar{\bm y}}_t^i \right),\\ 
\tilde {\bm P}_t^{\bm x, f, i} &= {\bm P}_t^{\bm x, f, i} -  \bm K_t^{f,i}  \bm P^{  \bm y, f, i}_t \left( \bm K_t^{f,i}\right)^\top \\
 \bm K_t^{f,i}&= \bm P^{ \bm x \bm y, f, i}_t \left(  \bm P^{  \bm y, f, i}_t \right)^{-1}
\end{align*}

				\State Draw a highly probable particle $\bm \xi_t^i \sim \mathcal{N}(\bm 0, \bm I)$, and update the filtering particle 
\begin{align*}
\bar{\bm{x}}_{t}^{f, i} &=    \tilde  {\bm{m}}_{t}^{f,i} + \sqrt{\tilde {\bm P}_t^{\bm x, f, i}} \bm \xi_t^i,  \ \
\bm \Sigma_t^{f,i} = \tilde {\bm P}_t^{\bm x, f, i} \\
w_t^{f,i} &= \frac{w_{t-1}^{f,i} \p(\bar {\bm y}_t | \bar{\bm{x}}_{t-1}^{f,i})}{\sum_{j=1}^N w_{t-1}^{f,j} \p(\bar {\bm y}_t | \bar{\bm{x}}_{t-1}^{f,j})},  \ \ \bar {\bm y}_t \, | \, \bar{\bm{x}}_{t-1}^{f, i} \sim \mathcal{N} \left(\hat {\bar{ \bm y}}_t^i, \bm P^{  \bm y, f, i}_t   \right) 
\end{align*}
				\EndFor 

\State Do resampling if necessary

			\EndFor
 
\vspace{5pt}

\item[]{\hspace{13pt} \em  \color{gray}//  Backward smoothing by Kalman-IPS}

 \State Initialize the smoothing particles by $\bar{\bm x}_{k+H}^{s,i} =  \bar{\bm x}_{k+H}^{f,i}$ and $\bm \Sigma_{k+H}^{s,i} = \bm \Sigma_{k+H}^{f,i}$  for $i=1,\ldots, N$ 
			\For {$t=k+H-1,k+H-2,\ldots,k$}
				
					\For {$i=1,2,\ldots,N$} 
   
                    \State Run {\tt RTSKalmanUpdate}
						\begin{align*} 
\tilde {\bm m}_t^{s, i} &=   \bar{\bm{x}}_{t}^{f,i}  +  \bm K_t^{s,i}  \left( \bar{\bm{x}}_{t+1}^{s,i} -  {\bm{m}}_{t+1}^{f,i} \right) \\
\tilde {\bm P}_t^{\bm x, s, i} &= {\bm \Sigma}_t^{f, i} + \bm K_t^{s,i} \left(  \bm \Sigma_{t+1}^{s,i} -  \bm P_{t+1}^{\bm x, f,i}  \right) \left(\bm K_t^{s,i}\right)^\top\\
\bm K_t^{s,i} &= \bm P^{ \bm x, f, i}_{t,t+1}   \left(  \bm P^{ \bm x,  f, i}_{t+1}  \right)^{-1} 
\end{align*}

\State Draw a highly probable particle $\bm \xi_t^i \sim \mathcal{N}(\bm 0, \bm I)$, and update the smoothing particle 
		\begin{align*}
\bar{\bm{x}}_{t}^{s, i} &=    \tilde  {\bm{m}}_{t}^{s,i} + \sqrt{\tilde {\bm P}_t^{\bm x, s, i}} \bm \xi_t^i,  \ \
\bm \Sigma_t^{s,i}  = \tilde {\bm P}_t^{\bm x, s, i}, \ \
w_t^{s,i}  = {1 \over N}
\end{align*}

					\EndFor
			\EndFor

\vspace{5pt}

\State Compute the final estimate for $\bar {\bm x}_{k}$
\begin{align*}
\hat {\bar{\bm x}}_{k}^* = {1 \over N} \sum_{i=1}^N \bar{\bm{x}}_{k}^{s, i} 
\end{align*}
\State Export and apply control  decisions

		\EndFor
  
	\end{algorithmic} 
\end{algorithm}
\end{minipage}
\end{figure*}

\subsection{MPIC via Unscented Kalman-IPF and Kalman-IPS}

We have shown that implicit particle filtering/smoothing can be approximately realized as banks of nonlinear Kalman filters/smoothers. Their exact execution will depend on which Kalman filter to use. The literature provides a range of options, which, among others, include the extended, unscented, ensemble, and cubature Kalman filters. While all these filters show utility for different applications, we note that the unscented Kalman filter is particularly suitable to enable MPIC for motion planning. First, the unscented Kalman filter offers second-order accuracy in estimation, which compares with the first-order accuracy of the extended Kalman filter. Second, it requires only modest computation for nonlinear systems with low- to medium-dimensional state spaces, and the considered NSS vehicle model falls into this case. 
Finally, its derivative-free computation eliminates a need for model linearization, which would have been burdensome for the NSS model.

Lying at the center of the unscented Kalman filter is the so-called unscented transform or $\mathcal{UT}$, which tracks the statistics in nonlinear transformations of Gaussian random vectors. Briefly, consider $\bm y = \gamma (\bm x) + \bm w$, where $\bm x$ and $\bm y$ are random vectors, $\bm w$ is a random noise vector, and $\gamma(\cdot)$ is an arbitrary nonlinear function. If we suppose  $\bm x \sim \mathcal{N} \left( \bm m^{\bm x}, \bm P^{\bm x} \right)$ and $\bm w \sim \mathcal{N} \left(\bm 0, \bm Q \right)$,   $\mathcal{UT}$ will generate the statistics  and  form a Gaussian approximation for $\bm y$:
\begin{align*}
\left[    \bm m^{\bm y}, \bm P^{\bm y}  , \bm P^{\bm x\bm y}  
\right] &= \mathcal{UT}  \left(\gamma, \bm m^{\bm x}, \bm P^{\bm x}, \bm Q \right),\\
\bm y &\sim \mathcal{N}\left( \bm m^{\bm y}, \bm P^{\bm y} \right).
\end{align*}
What $\mathcal{UT}$  does behind this is identifying a set of sigma points (deterministically chosen particles) to approximately represent $p(\bm x)$, projecting them through $\gamma(\cdot)$, and then using the transformed sigma points to compute $ \bm m^{\bm y}$, $\bm P^{\bm y} $ and $\bm P^{\bm x \bm y} $.   A more detailed description is offered in Algorithm~\ref{alg:UncentedTransform}. 

Here, we can apply $\mathcal{UT}$ to the Kalman-IPF algorithm to perform {\tt KalmanPredict} and {\tt KalmanUpdate}. Subsequent to  the forward filtering pass, the Kalman-IPS algorithm can be run. Going forward with the idea, we can use the two algorithms to execute the MPIC framework for motion planning. The resultant MPIC implementation, called MPIC-X\footnote{We refer to the algorithm   as MPIC-X to distinguish it
from the MPIC framework  that it computationally implements.}, is summarized in Algorithm~\ref{alg:MPIC-X}.

\section{Discussion}
\label{Sec:Discussion}
This section   discusses the merits, implementation aspects, and potential extensions for the above study. 

While growing out of the notion of MPC,  the MPIC-X algorithm breaks away from mainstream MPC realizations based on gradient optimization. The primary difference is that it builds upon the perspective of Bayesian estimation and harnesses the power of Monte Carlo sampling to infer the best control decisions. The MPIC-X algorithm will yield high computational efficiency for control of NSS models for several reasons. First, it is derivative-free and   obviates the need for generating and evaluating gradients. Second, the algorithm exploits the Markovianity of the system to perform recursive estimation for filtering and smoothing. The sequential computation as a result deals with small-sized problems one after another, improving the efficiency by large margins. Finally, the Kalman-IPF and Kalman-IPS algorithms, which together serve as the computational engine of the MPIC-X algorithm, are capable of identifying much fewer but better-quality particles to accelerate search in the state and control space.  

The MPIC-X algorithm enforces constraint adherence via multiple built-in mechanisms. First, the virtual measurement $y_{g,t}=0$, which applies to the particle update step in~\eqref{IPF-Kalman-Update}, secures generating particles within the constraints in the filtering pass. Second, $y_{g,t}=0$ also influences the importance weight evaluation in~\eqref{IPF-particle-weight}, assigning lower weights to particles that break the constraints. These particles are subsequently eliminated in the resampling step. Finally, the smoothing pass further enhances the particle quality for even greater compliance with the constraints.


Some tricks will help the MPIC-X algorithm succeed.  First, we highlight that effective inferential control would require setting up a Bayesian estimation problem for~\eqref{virtual-NSS-compact} such that $\bar{\bm y}_t$ contains enough information about $\bar {\bm x}_t$. This connects to the notions of observability and detectability for nonlinear systems. As per our experience, it often helps if we have some reference values in $\bar {\bm y}_t$ for all the unknown quantities in $\bar {\bm x}_t$, or imposing meaningful limits for these quantities if there exist any. Second, we find it useful to inflate the covariances $\bar {\bm Q}$ and $\bar {\bm R}$ with the same ratio. The covariance inflation makes no change to the original MPC formulation as it multiplies the cost function in~\eqref{cost-function} with a constant but will enlarge the space in which to sample the particles. The consequent wider searches across the state and control space will improve the inferential control performance. 
Third, warm-start will expedite the success in the filtering/smoothing passes of the MPIC-X algorithm. As a simple yet effective way, one can use $\bar {\bm x}_{k+1}^{s, i}$ for $i=1,\ldots, N$ obtained at time $k$ to initialize the filtering particles for the subsequent horizon starting at time $k+1$.  
A final trick is to replace the ideal barrier function in~\eqref{Ideal-Barrier-Funcion} with a modified softplus function
\begin{align*}
    \psi(x) = \frac{1}{a} \ln \left( 1+ e^{b  x} \right),
\end{align*}
where $a, b>0$ are tunable parameters. By tuning $a$ and $b$, one can change the shape of $\psi(x)$ to adjust the constraint satisfaction level.

Implementing the MPIC-X algorithm requires moderate parameter tuning to maximize its performance, mostly to ensure wide enough sampling ranges within the state and control space.  
The tuning effort hence is largely directed towards adjusting some covariance matrices used in the algorithm. First,  $\bar{\bm Q}$ and $\bar{\bm R}$ may need to be inflated, as aforementioned.  Second,  for  the horizon $[k, k+H]$,  $ {\bm \Sigma}_k^{f} = \mathrm{diag}\left( \bm 0, \bm \Sigma^{f,\bm u}_k , \bm \Sigma_k^{f,\Delta \bm u}  \right)$, the initial filtering covariance at time $k$, should be chosen such that    the initial particles  for $\bm u_k$ and $\Delta \bm u_k$ are explorative enough. 
Third, to draw highly probable particles for $\bm \xi_t$, we can sample from $\mathcal{N}\left(0, \bm \Sigma_{\mathrm{IIS}} \right)$, where $\bm \Sigma_{\mathrm{IIS}} = \mathrm{diag} \left( \sigma_1 \bm I, \sigma_2 \bm I, \sigma_3 \bm I\right)$  and $\sigma_i\leq 1$ for $i=1,2,3$, so that the particles are highly probable with respect to $\mathcal{N} \left(\bm 0, \bm I \right)$.  Here, $\sigma_i$ for $i=1,2,3$ correspond to the implicit importance sampling for $\bm x_k$, $\bm u_k$, and $\Delta \bm u_k$, respectively,  with $\sigma_2,\sigma_3\gg \sigma_1$ for the sake of explorative control input. Finally, $\mathcal{UT}$ involves some tunable parameters, among which $\alpha$ adjusts the spread of the sigma points. The literature often suggests $\alpha = 10^{-3}$~\cite{Sarkka:Cambridge:2013}, but a much larger $\alpha$ is more effective here to enhance exploration due to the need of dealing with the highly nonlinear NSS model.

Various extensions are available to expand the proposed study. First, if the original MPC problem has  a non-quadratic control objective $J(\bar {\bm x}_{k:k+H}) = \sum_{t=k}^{k+H} \ell (\bar {\bm x}_t)$,  where $\ell(\cdot)$ is an arbitrary nonlinear function, we can follow~\cite{Toussaint:ICML:2009} to introduce a binary random variable $O_t \in \{0, 1 \} $   in place of $\bar {\bm y_t}$ in~\eqref{virtual-NSS-compact}  such that
\begin{align*}
\p(O_t = 1 | \bar {\bm x}_t) \propto \exp \left(   - \ell(\bar {\bm x}_t)  \right).
\end{align*}
Here,  $O_t$ measures the probability that the virtual system in~\eqref{virtual-NSS-compact} behaves optimally. 
Skipping the proof, we conclude that the corresponding MPIC problem will ask for Bayesian estimation to determine $\p ( \bar {\bm x}_{k:k+H} | O_{k:k+H}=\bm 1)$. Second,  the MPIC  framework can be implemented by more filtering/smoothing methods that provide some desirable properties. For instance,  our more recent explorations have leveraged ensemble Kalman smoothing to enable MPIC of high-dimensional nonlinear or NSS systems as they have superior computational advantages for such systems~\cite{Askari:ACC:2024,Vaziri:ACC:2025}. Third, it is of our interest to analyze the convergence properties of the MPIC-X algorithm, as an understanding of these properties will further facilitate its application. Fourth, while the MPIC-X algorithm offers high computational efficiency,  some real-world problems, including motion planning for certain robotics problems, impose extremely stringent computational demands. This raises an intriguing question: whether and how the MPIC-X algorithm can be adapted to achieve anytime computation and control.
Finally, while the MPIC framework is motivated by the motion planning problem, it lends itself well to a broader spectrum of control applications, especially those that can be dealt with by MPC.

\section{Numerical Simulations}
\label{Sec:NumSim}
In this section, we conduct a simulation study that applies the MPIC-X algorithm to motion planning in autonomous highway driving. 
In what follows, we first describe the simulation setting, then examine the overtaking scenario with a comprehensive comparison against gradient-based MPC, and finally show the emergency braking scenario. 

\subsection{The Simulation Setting}

We use the MATLAB Autonomous Driving Toolbox running on a workstation with a 3.5GHz Intel Core i9-10920X CPU and 128GB of RAM to set up and implement the driving scenarios.  Driving on a structured highway, the EV can access all the necessary road information and localize itself using equipped sensors.  It acquires nominal driving specifications from a higher-level decision maker that includes waypoints and desired speeds, among others. The EV is also capable of predicting the future positions of the OVs over the upcoming planning horizon. The OV's trajectories are pre-specified using the Toolbox. In the simulation, the sampling period is $\Delta t = 0.1 \, \mathrm{s}$, which is sufficient for autonomous vehicles~\cite{Eiras:TRO:2022}, though a practitioner can take a different choice based on the specific driving requirements and available computing power. We also disregard latencies here to focus solely on assessing the performance of the MPIC-X algorithm.
The EV is set to maintain a safe distance of $1 \, \mathrm{m}$ for simplicity. To streamline the implementation process, we conduct motion planning in Frenet coordinates for its mathematically simpler representation, yet without loss of generality. We consider different horizon lengths for $H$ and particle numbers for $N$ when implementing the MPIC-X algorithm in the overtaking scenario for the purpose of comparison. We let $H=40$ and $N=10$ in the emergency braking scenarios.

An objective in the simulation of the overtaking scenario is to evaluate the capabilities of the MPIC-X algorithm and gradient-based MPC in handling NSS models with different structures. To this end, we consider the following feedforward neural network architectures given~\eqref{RNN_X_diff}:

\begin{itemize}

\item Net-$1$: a single hidden layer with $512$ neurons;

\item Net-$2$: two hidden layers with $128$ neurons in each layer;

\item Net-$3$: four hidden layers with $64$, $128$, $128$, and $64$ neurons in each layer. 

\end{itemize}
The activation function for all the hidden layers is the $\mathrm{tanh}$ function. For convenience, we use the single-track bicycle model to generate synthetic training datasets and then train Net-1/2/3 using the Adam optimizer. Only Net-$2$ is used in the emergency braking scenario. 
\begin{figure}[t]
\centering

\includegraphics[width=0.49\textwidth,trim={15.8cm 1.4cm 15.4cm 1.5cm},clip]{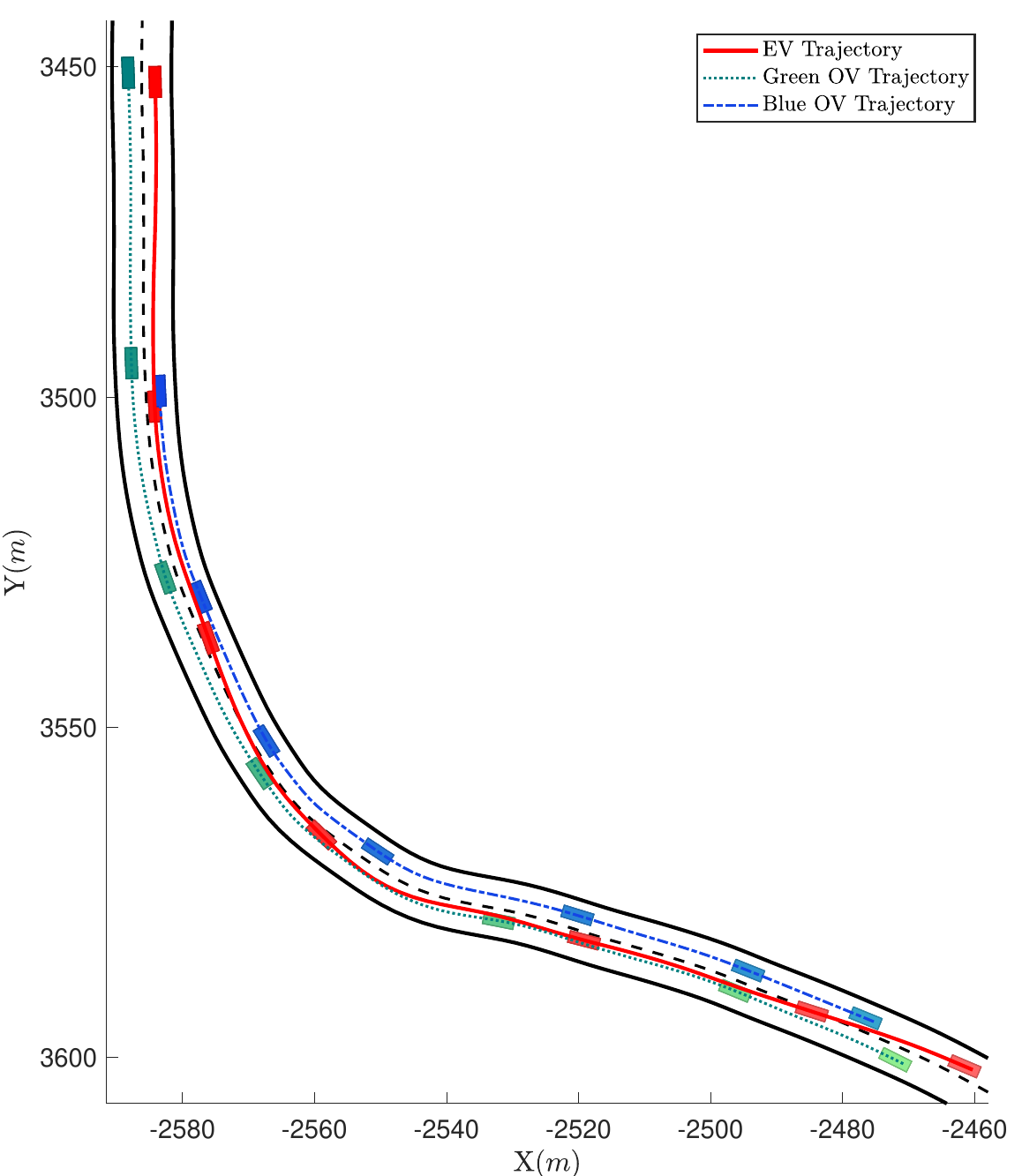}

\caption{Trajectories of the EV (in red) and OVs (in green and blue) in the overtaking scenario. The color change from light to dark indicates the passage of the driving time.}
\label{fig:OvertakingTraj}
\end{figure}
\subsection{The Overtaking Scenario}\label{SubSec:NN-dynamics}

In this scenario, the EV and OVs are running on a curved, two-lane highway road. As shown in Fig.~\ref{fig:OvertakingTraj}, the EV (in red) is initially behind two OVs (in green and blue), moving at slower speeds. To overtake the OVs, the EV computes motion plans using the MPIC-X algorithm and then maneuvers. Here, the MPIC-X algorithm adopts Net-$2$ for the NSS model and uses $N=10$ particles, with the planning horizon  $H =40$. Fig.~\ref{fig:OvertakingTraj} shows that the EV successfully overtakes the OVs without collision despite the curvature of the road. Fig.~\ref{fig:OvertakingControls} further shows the actuation profiles in  acceleration  and steering as well as the profiles of the distance between the EV and OVs. As is seen, the EV manages to comply with the driving and safety constraints throughout the maneuvering process. The results show that the MPIC-X algorithm effectively identifies motion plans to accomplish the overtaking task.

\begin{figure*}[t]
    \centering
    \begin{subfigure}[t]{0.325\textwidth}
    \includegraphics[width=\textwidth,trim={4.3cm 8cm 3.4cm 9.1cm},clip]{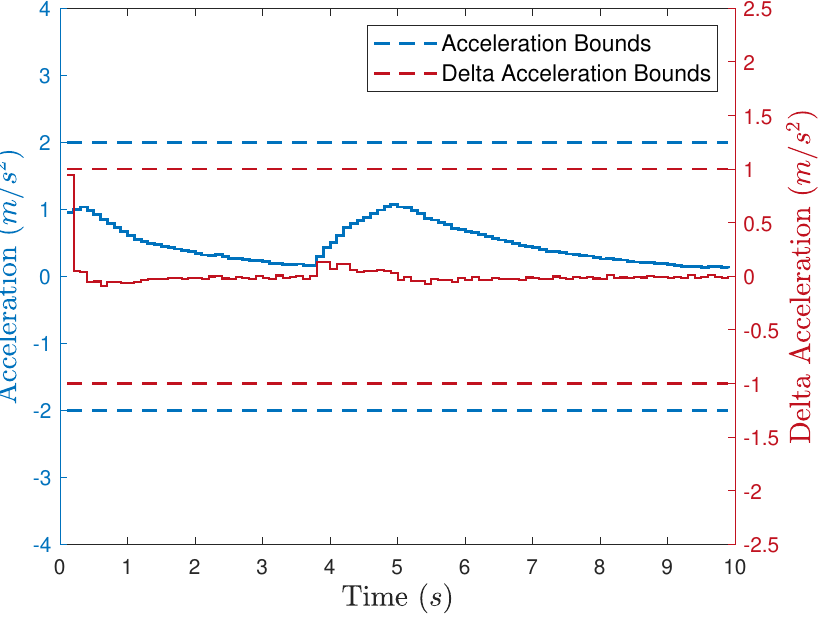}
    \caption{}
    \label{fig:overtaking-a}
\end{subfigure}
\hspace{1mm}
\begin{subfigure}[t]{0.325\textwidth}
    \includegraphics[width=\textwidth,trim={4.05cm 8cm 3.6cm 9.1cm},clip]{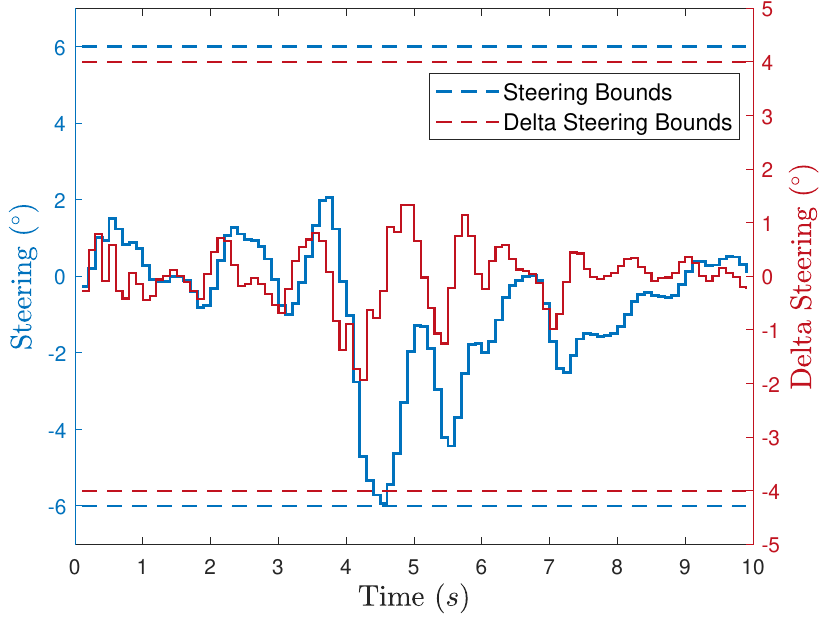}
    \caption{}
    \label{fig:overtaking-b}
\end{subfigure}
\hspace{1mm}
\begin{subfigure}[t]{0.31\textwidth}
    \includegraphics[width=\textwidth,trim={4cm 8cm 4.3cm 9.05cm},clip]{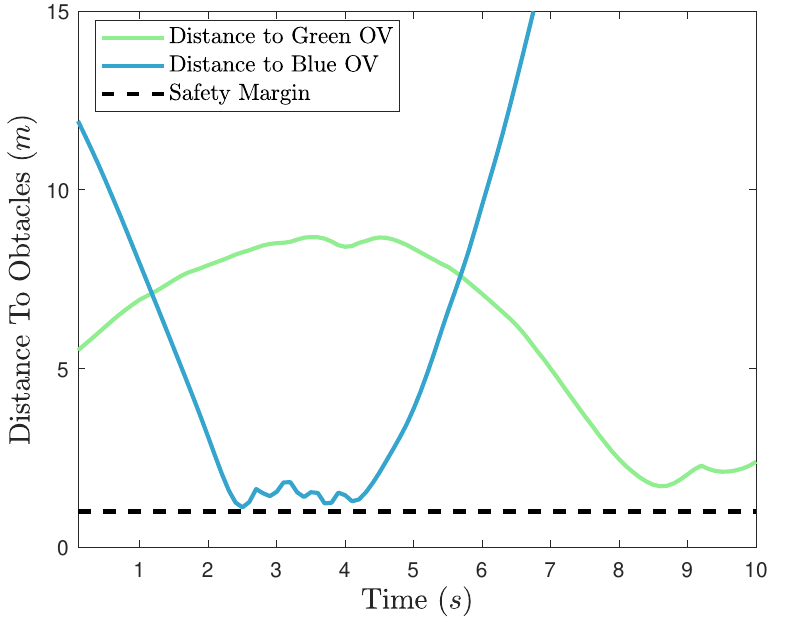}
    \caption{}
    \label{fig:overtaking-c}
\end{subfigure}
    \caption{Simulation results for the overtaking scenario: (a) the acceleration and incremental acceleration profiles in solid curves, with respective bounds in dashed lines; (b) the steering and incremental steering profiles in solid lines, with their respective bounds in dashed lines; (c) distance between the EV and OVs, with the dashed-line safe margin.}
    \label{fig:OvertakingControls}
\end{figure*}

\begin{figure*}[h!]
    \centering
    \begin{subfigure}[t]{0.325\textwidth}
    \includegraphics[width=\textwidth,trim={3.9cm 8cm 3.4cm 8.6cm},clip]{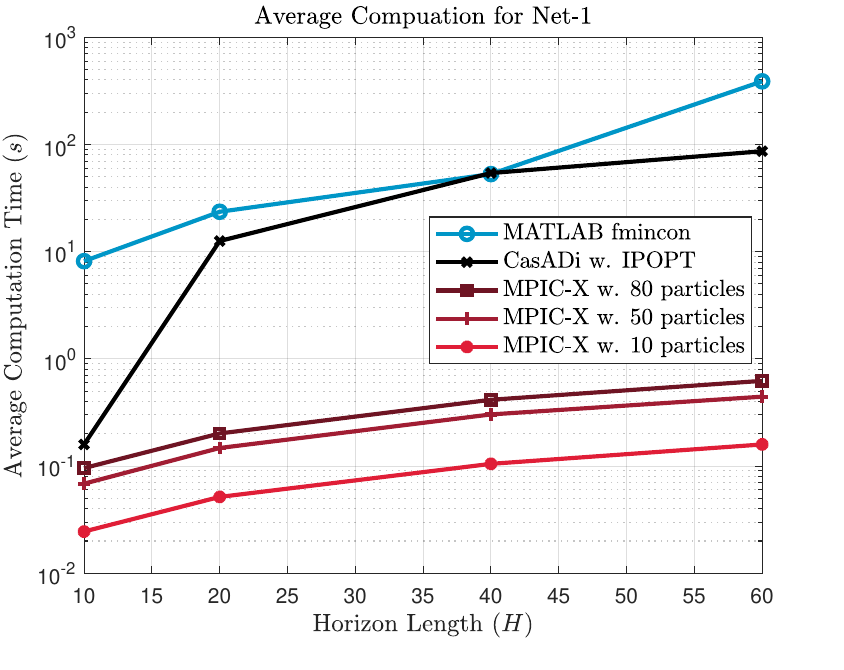}
    \caption{}
    \label{fig:computation-a}
\end{subfigure}
\hspace{1mm}
\begin{subfigure}[t]{0.32\textwidth}
    \includegraphics[width=\textwidth,trim={3.9cm 8cm 3.6cm 8.6cm},clip]{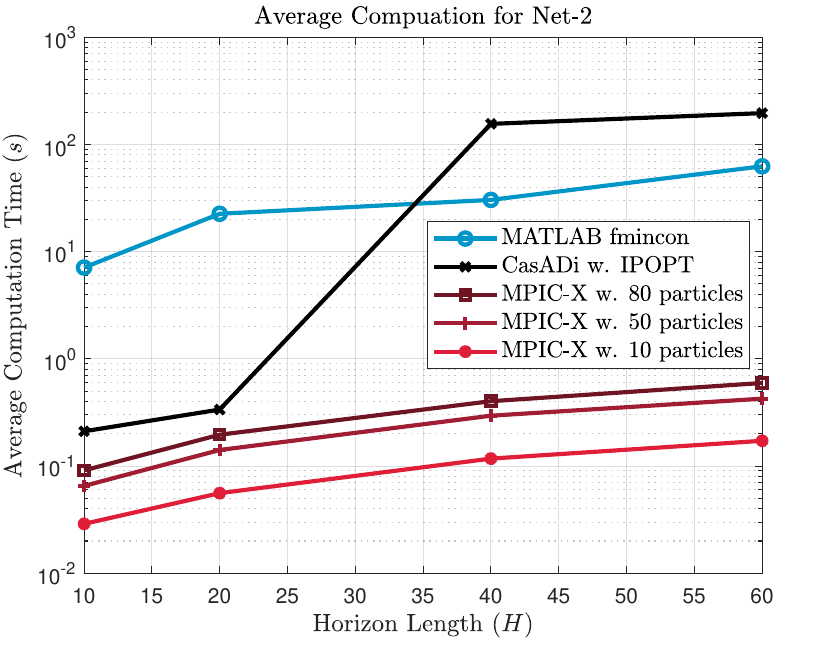}
    \caption{}
    \label{fig:computation-b}
\end{subfigure}
\hspace{1mm}
\begin{subfigure}[t]{0.32\textwidth}
    \includegraphics[width=\textwidth,trim={3.9cm 8cm 3.6cm 8.6cm},clip]{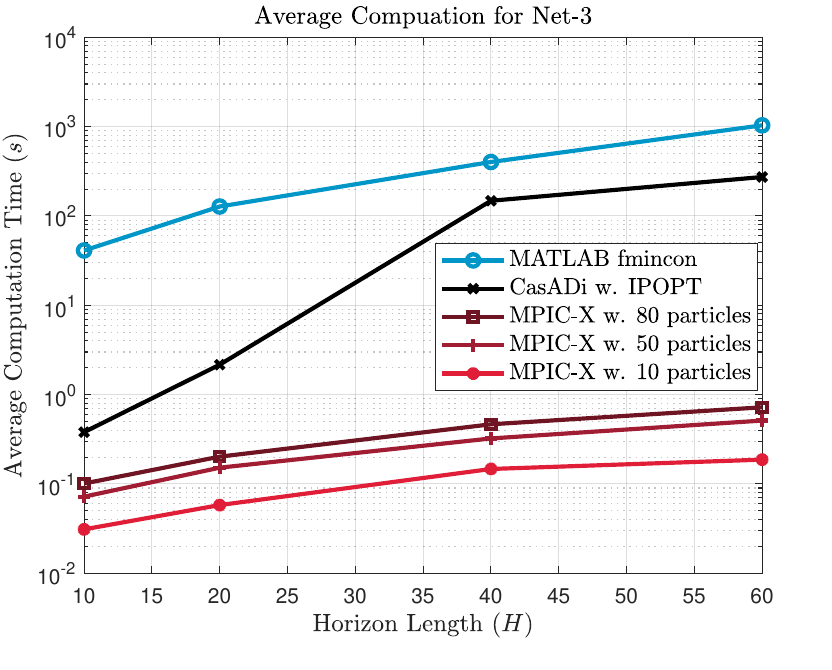}
    \caption{}
    \label{fig:computation-c}
\end{subfigure}
    \caption{Computational time comparison between MPIC-X algorithm and gradient-based MPC for different networks, horizon lengths, and particle numbers: (a) Net-1; (b) Net-2; (c) Net-3.}
    \label{fig:Computation-Nets}
\end{figure*}

\begin{table*}[!htbp] 
\caption{Numerical comparison of MPIC-X and gradient-based MPC}

\resizebox{1\textwidth}{!}{
 \begin{tabular}{c  c  c  c  c  c  c}
\toprule
Network & \makecell[c]{Horizon \\ ($H$)} & Method & Total Cost & \makecell[c]{ Average \\ Computation \\ Time (s)} & \makecell[c]{Relative Cost \\ Change w.r.t \\ IPOPT \& {\tt fmincon} (\%)} & \makecell[c]{Relative Computation \\ Time Change w.r.t \\ IPOPT \& {\tt fmincon} (\%)}  \\
\midrule
\multirow{19.5}{2cm}{\makecell[c]{Net-$1$} \\ $\qquad \; \, {\scriptstyle (512)}$}
& \multirow{5.5}{2cm}{\makecell[c]{$10$}} & CasADi w. IPOPT & $16,808$ & $0.159$ & --- &  --- \\
& & MATLAB {\tt fmincon} & $17,922$ & $8.162$ & --- & --- \\
 & & MPIC-X w. $10$ particles & $18,574$ & $0.025$ &  $10.51$\, \& \,$3.64$ & $-84.28$\, \& \,$-99.69$ \\
 & & MPIC-X w. $50$ particles & $18,158$ & $0.069$ &  $8.04 $\, \, \& \,$1.32$ & $-56.60$\, \& \,$-99.15$ \\
 & & MPIC-X w. $80$ particles & $18,077$ & $0.096$ &  $7.56 $\, \, \& \,$0.86$ & $-39.62$\, \& \,$-98.82$ \\
 \cmidrule(l){2-7}
 & \multirow{5.5}{2cm}{\makecell[c]{$20$}} & CasADi w. IPOPT & $63,197$ & $12.58$ & --- & --- \\
& & MATLAB {\tt fmincon} & $18,862$ & $23.57$ &  --- & --- \\
 & & MPIC-X w. $10$ particles & $18,235$ & $0.052$ &  $-71.15$\, \& \,$-3.33$ & $-99.59$\, \& \,$-99.78$ \\
 & & MPIC-X w. $50$ particles & $17,701$ & $0.147$ & $-71.99$\, \& \,$-6.16$ & $-98.83$\, \& \,$-99.37$ \\
 & & MPIC-X w. $80$ particles & $17,602$ & $0.202$ &  $-72.15$\, \& \,$-6.68$ & $-98.40$\, \& \,$-99.15$ \\
 \cmidrule(l){2-7}
  & \multirow{5.5}{2cm}{\makecell[c]{$40$}} & CasADi w. IPOPT & --- & --- & --- & ---  \\
& & MATLAB {\tt fmincon} & $30,103$ & $52.96$ & --- & --- \\
 & & MPIC-X w. $10$ particles & $18,596$ & $0.105$ & ---\, \& \,$-38.22$ & ---\, \& \,$-99.80$ \\
 & & MPIC-X w. $50$ particles & $17,944$ & $0.303$ & ---\, \& \,$-40.39$ & ---\, \& \,$-99.43$ \\
 & & MPIC-X w. $80$ particles & $17,890$ & $0.415$ & ---\, \& \,$-40.57$ & ---\, \& \,$-99.22$ \\
 \cmidrule(l){2-7}
 & \multirow{5.5}{2cm}{\makecell[c]{$60$}}& CasADi w. IPOPT & --- & --- & --- &  --- \\
& & MATLAB {\tt fmincon} & --- & --- & --- & --- \\
 & & MPIC-X w. $10$ particles & $18,496$ & $0.159$ & --- & --- \\
 & & MPIC-X w. $50$ particles & $18,053$ & $0.444$ & --- & --- \\
 & & MPIC-X w. $80$ particles & $17,940$ & $0.624$ & --- & --- \\
\midrule
\multirow{19.5}{2cm}{\makecell[c]{Net-$2$}\\ $\quad \; \,{\scriptstyle (128-128)}$}
& \multirow{5.5}{2cm}{\makecell[c]{$10$}} & CasADi w. IPOPT & $16,650$ & $0.211$ & --- &  --- \\
& & MATLAB {\tt fmincon} & $18,833$ & $7.111$ & --- & --- \\
 & & MPIC-X w. $10$ particles & $18,992$ & $0.029$ & $17.31$ \& $\phantom{-}3.71$ & $-86.26$\, \& \,$-99.59$ \\
 & & MPIC-X w. $50$ particles & $18,923$ & $0.065$ & $13.65$ \& $-0.48$ & $-69.10$\, \& \,$-99.12$ \\
 & & MPIC-X w. $80$ particles & $18,778$ & $0.091$ & $12.77$ \& $-0.29$ & $-56.82$\, \& \,$-98.72$ \\
 \cmidrule(l){2-7}
 & \multirow{5.5}{2cm}{\makecell[c]{$20$}} & CasADi w. IPOPT & $19,499$ & $0.337$ & --- & --- \\
& & MATLAB {\tt fmincon} & $17,209$ & $22.58$ &  --- & --- \\
 & & MPIC-X w. $10$ particles & $18,064$ & $0.055$ & $-7.36$ \& $4.97$ & $-83.40$\, \& \,$-99.75$ \\
 & & MPIC-X w. $50$ particles & $17,633$ & $0.141$ & $-9.57$ \& $2.46$ & $-58.04$\, \& \,$-99.37$ \\
 & & MPIC-X w. $80$ particles & $17,473$ & $0.196$ & $-10.4$ \& $1.53$ & $-41.74$\, \& \,$-99.13$ \\
 \cmidrule(l){2-7}
  & \multirow{5.5}{2cm}{\makecell[c]{$40$}} & CasADi w. IPOPT & --- & --- & --- & ---  \\
& & MATLAB {\tt fmincon} & --- & --- & --- & --- \\
 & & MPIC-X w. $10$ particles & $18,392$ & $0.117$ & --- & --- \\
 & & MPIC-X w. $50$ particles & $17,833$ & $0.295$ & --- & --- \\
 & & MPIC-X w. $80$ particles & $17,692$ & $0.403$ & --- & --- \\
 \cmidrule(l){2-7}
 & \multirow{5.5}{2cm}{\makecell[c]{$60$}}& CasADi w. IPOPT & --- & --- & --- &  --- \\
& & MATLAB {\tt fmincon} & --- & --- & --- & --- \\
 & & MPIC-X w. $10$ particles & $18,535$ & $0.172$ & --- & --- \\
 & & MPIC-X w. $50$ particles & $17,932$ & $0.426$ & --- & --- \\
 & & MPIC-X w. $80$ particles & $17,746$ & $0.596$ & --- & --- \\
\midrule
\multirow{19.5}{2cm}{\makecell[c]{Net-$3$}\\ ${\scriptstyle (64-128-128-64)}$}
& \multirow{5.5}{2cm}{\makecell[c]{$10$}} & CasADi w. IPOPT & $16,396$ & $0.380$ & --- &  --- \\
& & MATLAB {\tt fmincon} & $17,086$ & $41.04$ & --- & --- \\
 & & MPIC-X w. $10$ particles & $18,551$ & $0.031$ &  $13.14$\, \& \,$8.57$ & $-91.84$\, \& \,$-99.92$ \\
 & & MPIC-X w. $50$ particles & $18,047$ & $0.072$ &  $10.07$\, \& \,$5.62$ & $-81.05$\, \& \,$-99.82$ \\
 & & MPIC-X w. $80$ particles & $18,017$ & $0.101$ &  $9.89\phantom{0}$\, \& \,$5.45$ & $-73.4\phantom{0}$\, \& \,$-99.75$ \\
 \cmidrule(l){2-7}
 & \multirow{5.5}{2cm}{\makecell[c]{$20$}} & CasADi w. IPOPT & $26,088$ & $2.163$ & --- & --- \\
& & MATLAB {\tt fmincon} & $15,549$ & $127.5$ &  --- & --- \\
 & & MPIC-X w. $10$ particles & $17,434$ & $0.058$ & $-33.17$\, \& \,$12.13$ & $-99.59$\, \& \,$-99.78$ \\
 & & MPIC-X w. $50$ particles & $16,970$ & $0.152$ & $-34.95$\, \& \,$9.14\phantom{0}$ & $-98.83$\, \& \,$-99.37$ \\
 & & MPIC-X w. $80$ particles & $16,853$ & $0.202$ & $-35.40$\, \& \,$8.39\phantom{0}$ & $-98.40$\, \& \,$-99.15$ \\
 \cmidrule(l){2-7}
  & \multirow{5.5}{2cm}{\makecell[c]{$40$}} & CasADi w. IPOPT & --- & --- & --- & ---  \\
& & MATLAB {\tt fmincon} & --- & --- & --- & --- \\
 & & MPIC-X w. $10$ particles & $17,961$ & $0.147$ & --- & --- \\
 & & MPIC-X w. $50$ particles & $17,291$ & $0.323$ & --- & --- \\
 & & MPIC-X w. $80$ particles & $17,363$ & $0.465$ & ---- & ---- \\
 \cmidrule(l){2-7}
 & \multirow{5.5}{2cm}{\makecell[c]{$60$}}& CasADi w. IPOPT & --- & --- & --- &  --- \\
& & MATLAB {\tt fmincon} & --- & --- & --- & --- \\
 & & MPIC-X w. $10$ particles & $18,135$ & $0.187$ & --- & --- \\
 & & MPIC-X w. $50$ particles & $17,600$ & $0.512$ & --- & --- \\
 & & MPIC-X w. $80$ particles & $17,361$ & $0.721$ & --- & --- \\
\bottomrule
\end{tabular}
 }
 
\label{Table: Computation-Comp}
\noindent{\footnotesize 
\textsuperscript{*}
``---'' indicates a failure to converge and find optima within the specified maximum of 5,000 iterations.}
\end{table*}

Next, we compare the MPIC-X algorithm against gradient-based MPC in terms of computation time and planning performance. 
 To solve gradient-based MPC, we use two solvers: MATLAB's {\tt fmincon} with the default interior point method, and CasADi~\cite{Andersson:MPC:2018} with IPOPT~\cite{Wchter:MP:2005}.

For a fair comparison, the closed form of each gradient is pre-determined offline and then called during the online optimization process, and the optimization is also warm-started. The stopping tolerances for {\tt fmincon} and CasADi-IPOPT are set close to each other to ensure impartiality in performance evaluation.  To achieve a comprehensive assessment, we perform the simulation for different planning horizons by letting  $ H = 10$, $20$, $40$, and $60$,  and for the MPIC-X algorithm, we use different particle numbers,   $N = 10$, $50$,  and $80$. Each setting comes with ten simulation runs to find out the average computation time.   

Table~\ref{Table: Computation-Comp} summarizes the simulation results and comparisons. The overarching observations are as follows.

\begin{figure*}[h]
    \centering
        \centering
    \begin{subfigure}[t]{0.25\textwidth}
        \includegraphics[trim={3.4cm 8cm 4.0cm 8.5cm},clip,width=\textwidth]{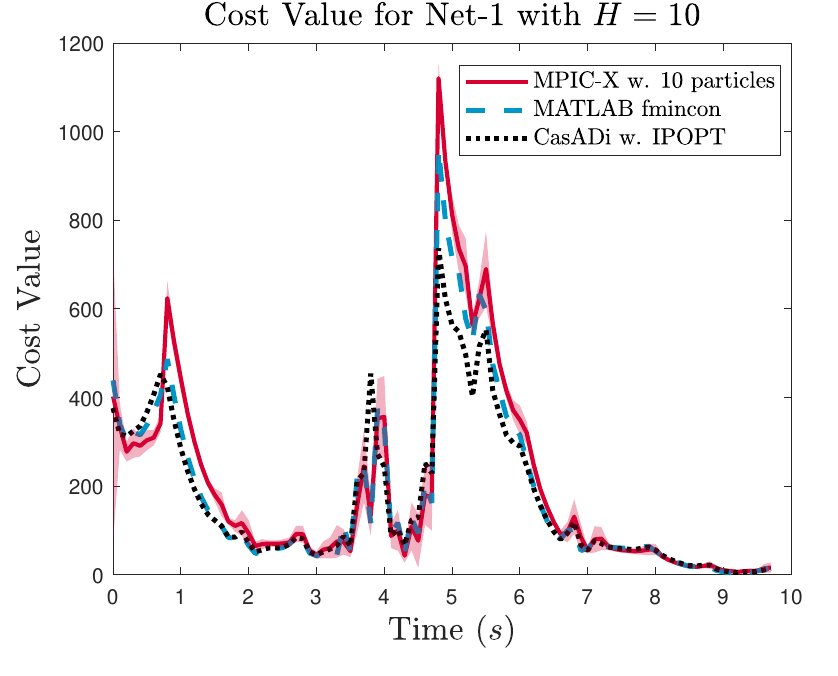}
        \caption{}
        \label{fig:Cost_Net2_H10}
    \end{subfigure}
    \hspace{-4mm}
    \begin{subfigure}[t]{0.25\textwidth}
        \includegraphics[trim={3.4cm 8cm 4.0cm 8.5cm},clip,width=\textwidth]{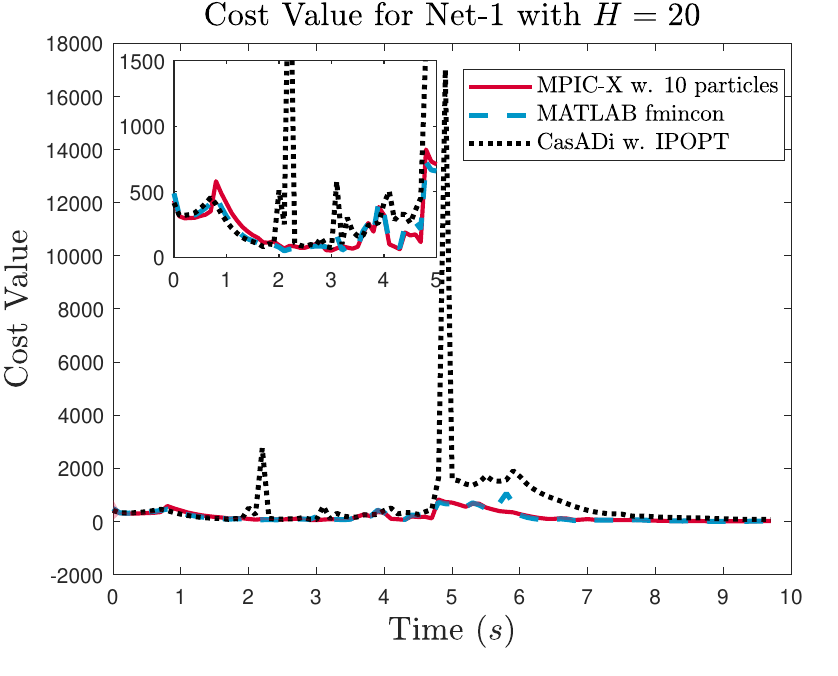}
        \caption{}
        \label{fig:Cost_Net2_H20}
    \end{subfigure}
    \hspace{-4mm}
    \begin{subfigure}[t]{0.25\textwidth}
        \includegraphics[trim={3.4cm 8cm 4.0cm 8.5cm},clip,width=\textwidth]{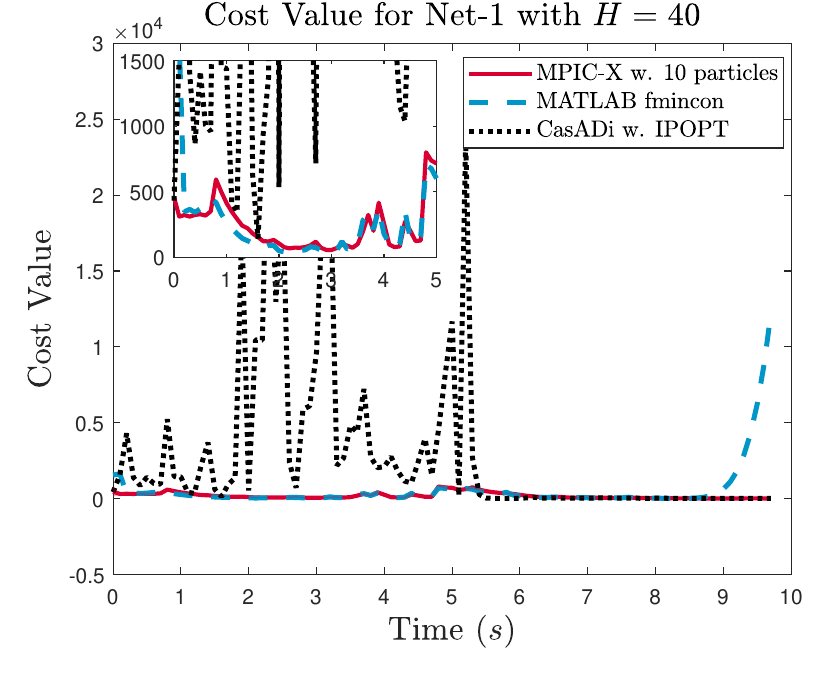}
        \caption{}
        \label{fig:Cost_Net2_H40}
    \end{subfigure}
    \hspace{-4mm}
    \begin{subfigure}[t]{0.245\textwidth}
        \includegraphics[trim={3.8cm 8cm 4.0cm 8.5cm},clip,width=\textwidth]{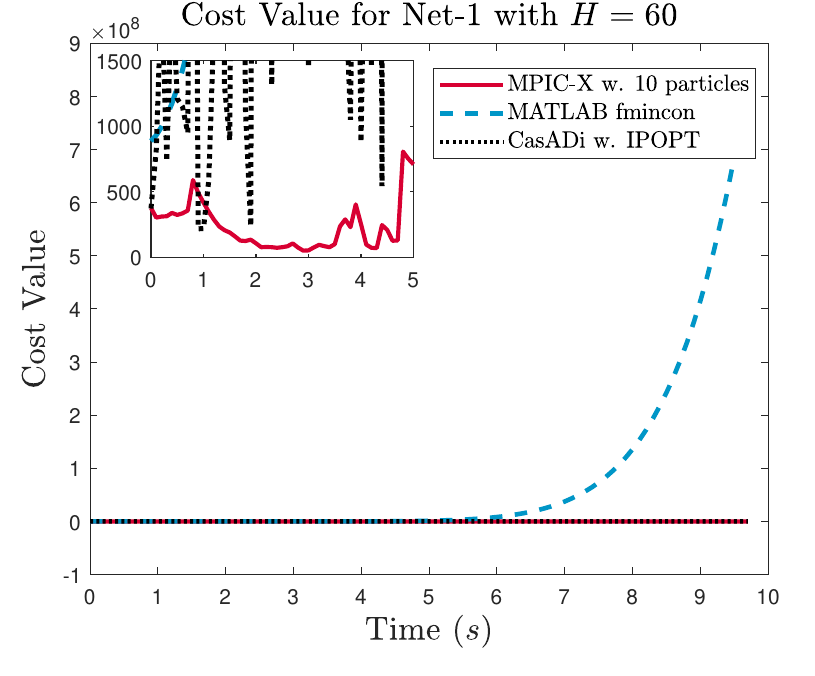}
        \caption{}
        \label{fig:Cost_Net2_H60}
    \end{subfigure}
    \caption{Cost performance comparison between the MPIC-X algorithm with ten particles and the gradient-based MPC for Net-1 at different horizon lengths: (a) $H=10$;  (b) $H=20$; (c) $H=40$; (d) $H=60$. The red-shaded region represents $\pm 2\sigma$ bounds for ten Monte Carlo runs of the MPIC-X algorithm.}
    \label{fig:Cost-Net1}
\end{figure*}

\begin{itemize}
\item For the gradient-based MPC, IPOPT is faster than {\tt fmincon}  by one to two orders of magnitude, despite their comparable cost performance. For both solvers, their computation time increases as the planning horizon $H$ extends from $10$ to $20$, with the increase for {\tt fmincon} more conspicuous. However, both IPOPT and {\tt fmincon} fail to find optima when the horizon increases further, e.g., $H=40$ and $60$, within the specified maximum of 5,000 iterations. The neural network architecture, especially the number of layers, also plays a crucial role in the computation. For both IPOPT and {\tt fmincon}, the computation time is notably more for Net-3 than for Net-1/2, as the more hidden layers in Net-3 introduce stronger nonlinearity and nonconvexity. The architectural complexity appears to impact {\tt fmincon} more.  These results reflect the limitations and struggle of the gradient-based MPC in control of NSS models, especially under long planning horizons or when employing sophisticated neural network architectures.

\item The MPIC-X algorithm demonstrates strong capabilities in controlling NSS models. While    IPOPT is much faster than {\tt fmincon} as shown in Table~\ref{Table: Computation-Comp}, the MPIC-X algorithm outpaces IPOPT  by a substantial degree. Using only $N=10$ particles, it reduces computation time by at least $80\%$ compared to IPOPT. Although   the computation unsurprisingly increases as $N$ grows for the MPIC-X algorithm, the increase rates are less than linear, and the algorithm    always outperforms IPOPT and {\tt fmincon}, by at least $40\%$. It is  important to emphasize the success of the MPIC-X algorithm in addressing motion planning even when the planning horizon $H=40$ and $60$,  where IPOPT and {\tt fmincon} fail  to converge. Another appealing feature of the MPIC-X algorithm is its computational insensitivity to the neural network architecture. Due to its sampling-based nature, the algorithm achieves nearly indistinguishable increase in computation time for Net-1/2/3, making it highly advantageous for controlling NSS models with varying levels of neural network complexity. 
Further, the MPIC-X algorithm generates slightly higher, yet still close, costs than IPOPT and {\tt fmincon} in the cases when the latter  manage to converge. The simulations constantly show that the MPIC-X algorithm with just $N=10$ particles can deliver good enough cost performance and outstanding computational efficiency.  

\end{itemize}

Fig.~\ref{fig:Computation-Nets} illustrates the higher computational efficiency of the MPIC-X algorithm by orders of magnitude than the gradient-based MPC. Both point to the outstanding computational merits and scalability of the MPIC-X algorithm. Fig.~\ref{fig:Cost-Net1} demonstrates the cost performance over time. What it shows reinforces what is described above---the MPIC-X algorithm presents comparable performance when the planning horizon is $H=10$ and $20$, and performs much better when $H=40$ and $60$. 

Finally, a side note is that IPOPT produces a much higher cost and requires longer computation time for Net-1 when $H=20$, which appears inconsistent with its performance in other cases. Something similar is also seen for {\tt fmincon} for Net-1 when $H=40$. This is because  Net-1's mildly lower predictive accuracy at $H=20$ and $H=40$, due to its simpler architecture, makes the computed gradients and Hessians deviate significantly from what is true, thereby weakening the optimization search. We draw from this observation that the NSS model must be sufficiently precise to make gradient-based MPC effective.

\subsection{The Braking Scenario}\label{SubSec:Congestion}

A leading cause of  highway traffic accidents   is the build-up of traffic congestion.  When the OVs ahead come to a complete stop, the EV must be able to decelerate from a high speed and brake to zero speed,  while considering passenger comfort and avoiding collision. We simulate this scenario by applying the MPIC-X algorithm with $N=10$ particles and utilizing   Net-$2$ for the vehicle model.

\begin{figure*}[t]\centering
\includegraphics[width=1\textwidth, trim={3.5cm 11.1cm 5cm 6.5cm},clip,,width=0.98\textwidth ]{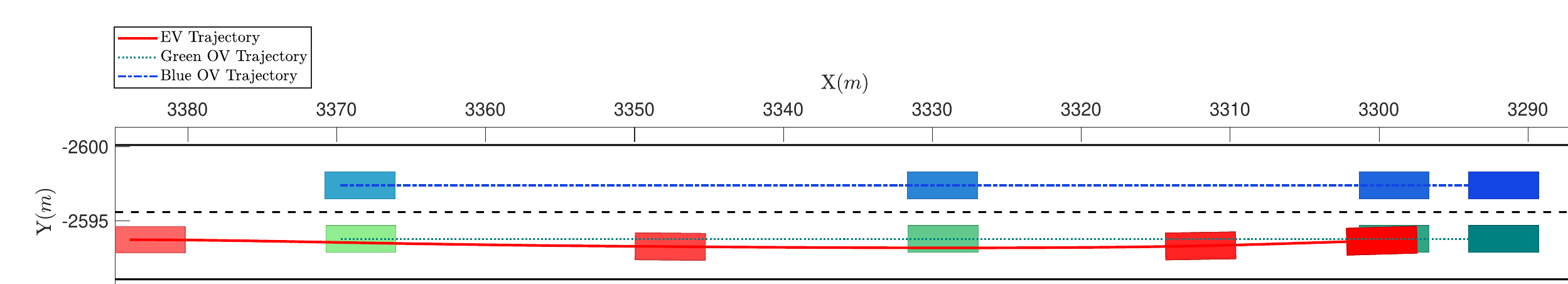}
\caption{Trajectories of the EV (in red) and OVs (in green and blue) in the braking scenario.}
\label{fig:ConjestionTraj}
\end{figure*}

\begin{figure*}[!htbp]
	    \centering
    \subfloat[\centering ]{{\includegraphics[trim={4.15cm 8cm 3.6cm 9.1cm},clip,width=0.325\textwidth]{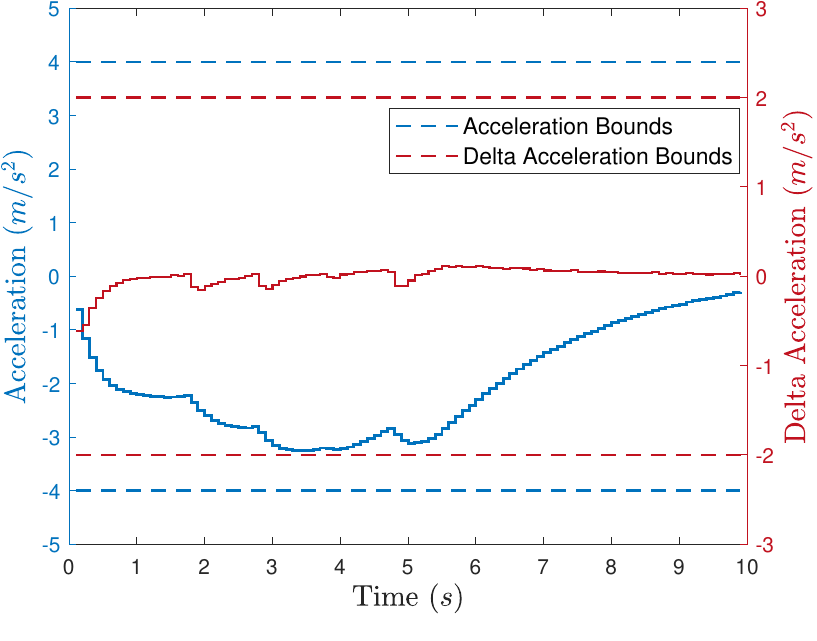} }\label{fig:cong-a}}
   \hspace{1mm}
    \subfloat[\centering ]{{\includegraphics[trim={4.15cm 8cm 4.3cm 9.1cm},clip,width=0.315\textwidth]{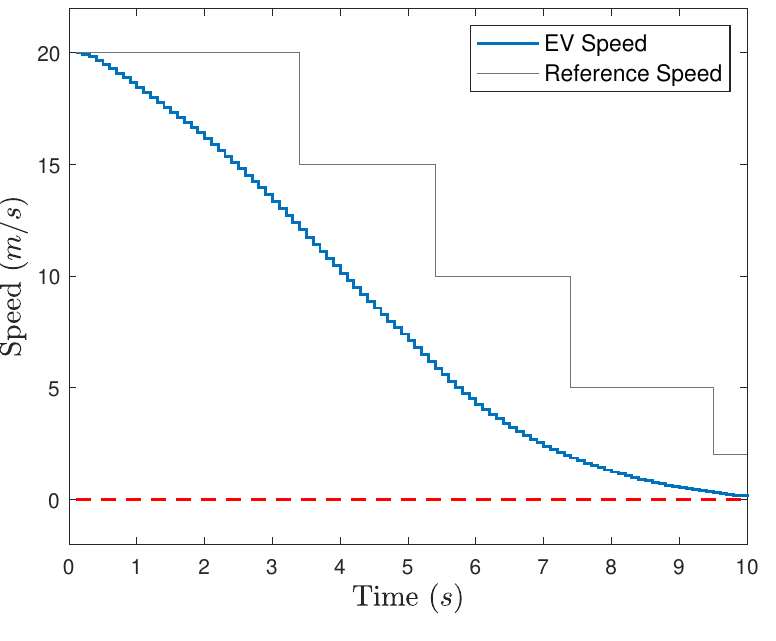} }\label{fig:cong-b}}
    \hspace{1mm}
    \subfloat[\centering ]{{\includegraphics[trim={4.15cm 8cm 4.3cm 9.1cm},clip,width=0.315\textwidth]{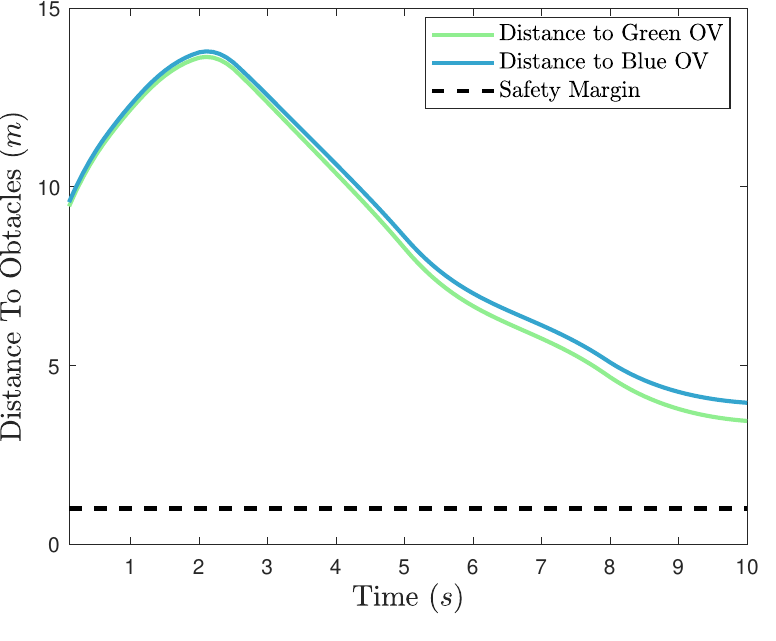} }\label{fig:cong-c}}
    \caption{Simulation results for the braking scenario: (a) the acceleration and incremental acceleration profiles in solid curves, with respective bounds in dashed lines;  (b) the EV speed profile and reference speed from the higher-level planner. (c) the distance between the EV and OVs, with the dashed-line safe margin.}
    \label{fig:CongestionControls}
\end{figure*}

Fig.~\ref{fig:ConjestionTraj} depicts the trajectories of the EV and OVs. Initially, the OVs travel faster than the EVs, and then the green OV rapidly decelerates to a complete stop. During the first three seconds, the higher-level decision maker is assumed to be unaware of the upcoming congestion, keeping the nominal speed unaltered, as seen in Fig.~\ref{fig:cong-b}. Despite this, the EV starts to decelerate. Meanwhile, the EV slightly deviates from the lane center in search of a feasible trajectory that can maintain the nominal speed, as seen in~Fig.~\ref{fig:ConjestionTraj}. However, no such trajectory exists as the OVs occupy both lanes, so the EV decides to stay in its current lane and brake to avoid a collision. This behavior demonstrates the effectiveness of MPIC's constraint awareness to ensure collision-free planning while adhering to all driving constraints, as shown in Fig.~\ref{fig:CongestionControls}..

\vspace{8pt}

The above simulation results show that the MPIC-X algorithm and the MPIC framework are effective at finding motion plans for autonomous vehicles and enabling control of NSS models. Based on the results, we highlight again that: 1) the MPIC-X algorithm offers high computational efficiency in solving MPC of NSS models, 2) its computational performance is almost insensitive to neural network architectures, 3) it can achieve good enough cost performance with just a few particles, and 4) the algorithm can succeed in solving large-scale, long-horizon MPC problems where gradient-based optimization may easily fail.

\begin{figure}[t]   \centering\includegraphics[width=0.45\textwidth,trim={1.8cm 5.2cm 2cm 6.3cm},clip]{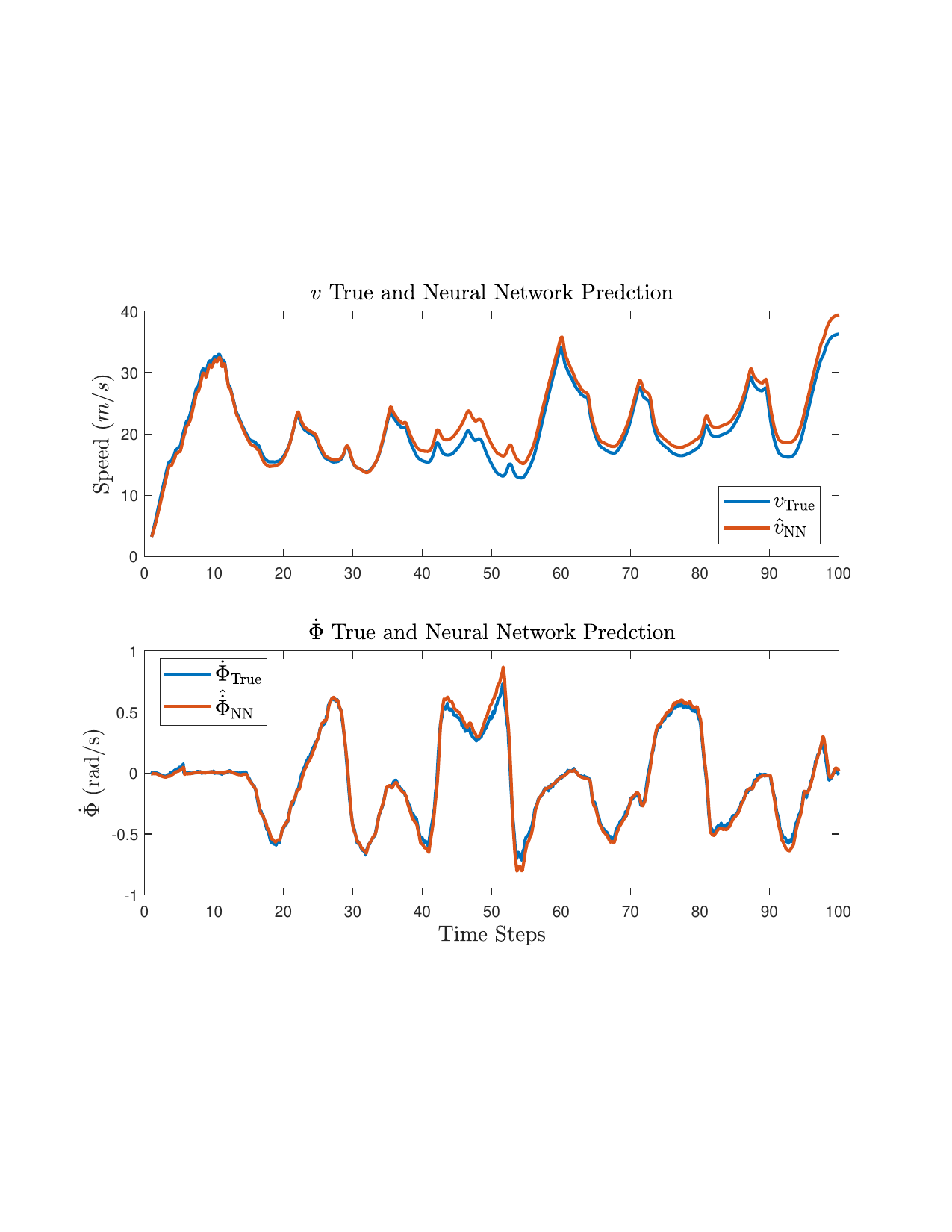}  \caption{Prediction performance of the NSS model trained for the DevBot2.0 car over a data sequence.}
    \label{fig:Exp_NN_test}
\end{figure}

\section{Motion Planning for a Real-World Vehicle}
\label{sec:Experimental}

In this section, we aim to evaluate the  potential practical performance of the MPIC-X algorithm. To achieve this, we first train an NSS model  for  a real-world autonomous vehicle  and then apply the MPIC-X algorithm to the model for motion planning.

The considered vehicle is a full-size Roborace DevBot 2.0 racing car, developed at the Technical University of Munich~\cite{Herman:CIST:2020,IAC:JFR:2023}. The car is equipped with a suite of sensors to measure the longitudinal/lateral velocity/acceleration, yaw rate, steering angle, and other parameters. Data were collected from the car driving on different race tracks at high speeds    and under varying tire-road friction  scenarios. As shown in~\cite{Herman:CIST:2020},    end-to-end neural networks are more accurate in capturing the car's dynamic behavior than the single-track model, especially in highly dynamic situations, given the availability of sufficient data. Also, neural network-based modeling can   easily accommodate  different tire-road friction levels, while physics-based modeling either fails or requires 
  manual calibration. For this car, high-fidelity  dynamics simulation data   are publicly available  at~\cite{TUM:GitHub:2021} and used here to train an NSS model for motion planning here.

The NSS model uses a residual feedforward neural network comprising three hidden layers, each with $256$ neurons using the activation function $\mathrm{tanh}$. 
We streamline the dataset by averaging every 12 time steps, thereby enhancing the efficiency of the training process. The NSS model must be trained to be accurate for a minimum of $H$ steps to allow motion planning on horizons of length $H$.  Fig.~\ref{fig:Exp_NN_test} shows  the prediction performance of the trained model. As is seen, the model can accurately predict $V$ and $\dot \Phi$ in the  longitudinal dynamics over $100$ steps, making it sufficient for the motion planning task.   

Next, the MPIC-X algorithm is applied to the trained NSS model to perform motion planning for overtaking. Fig.~\ref{fig:Exp_ConjestionTraj} illustrates that the red EV successfully overtakes the blue and green OVs with two lane changes. Figs.~\ref{fig:Exp_overtaking-a}-\ref{fig:Exp_overtaking-b} present the actuation profiles, while Fig.~\ref{fig:Exp_overtaking-c} depicts the distances between the EV and the OVs. Initially driving behind the green OV, the EV begins with mild acceleration and executes a lane change. Once a safe distance is achieved relative to both OVs, the EV steers back into the original lane, overtaking the green OV. The EV  maintains the compliance with   the actuation and safety constraints throughout the maneuvers.  
 Table~\ref{Exp-Table: Cost-Comp} offers a quantitative performance evaluation.  Overall, the MPIC-X algorithm demonstrates high computational efficiency and scalability with respect to both the prediction horizon length and the number of particles. The algorithm's cost performance is also excellent,  with just ten particles sufficient to achieve low enough costs at fast computation. These results are consistent with what Table~\ref{Table: Computation-Comp} shows when considering the complexity of the neural network architecture, showing the potential effectiveness of the MPIC-X algorithm for real-world problems. 

\begin{figure*}[t]
    \centering
    \includegraphics[width=0.98\textwidth, trim={5cm 10cm 4cm 8.5cm},clip]{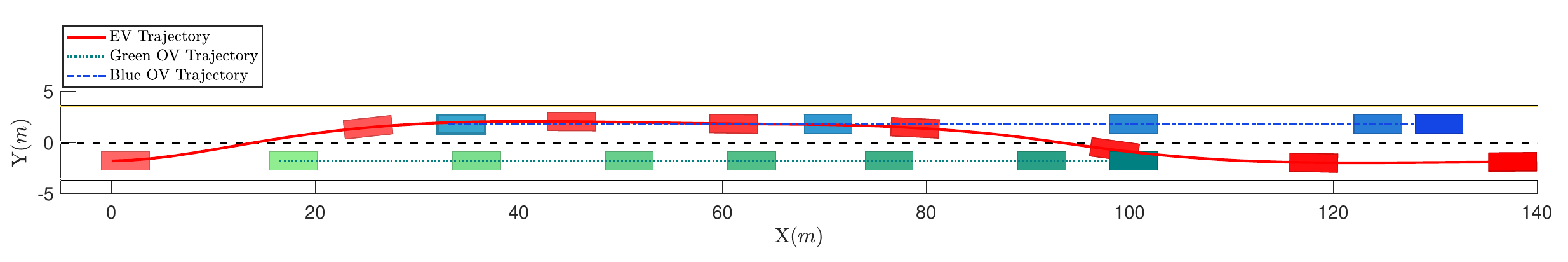}
    \caption{Trajectories of the EV (in red and based on the DevBot 2.0 car) and OVs (in green and blue) in the overtaking scenario. }
    \label{fig:Exp_ConjestionTraj}
\end{figure*}

\begin{figure*}[t]
    \centering
    \begin{subfigure}[t]{0.325\textwidth}
    \includegraphics[width=\textwidth,trim={3.8cm 8cm 3.4cm 9.1cm},clip]{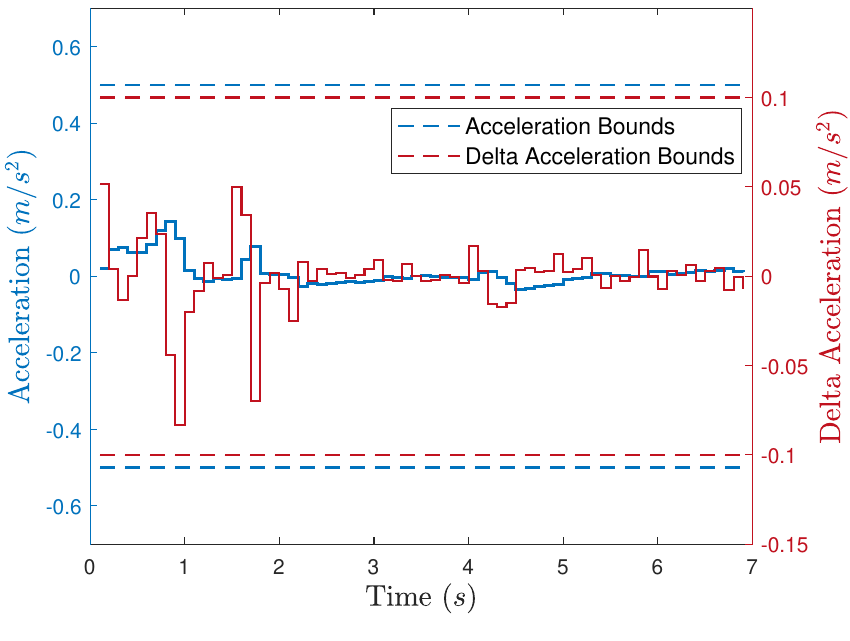}
    \caption{}
    \label{fig:Exp_overtaking-a}
\end{subfigure}
\hspace{1mm}
\begin{subfigure}[t]{0.325\textwidth}
    \includegraphics[width=\textwidth,trim={4.05cm 8cm 3.6cm 9.1cm},clip]{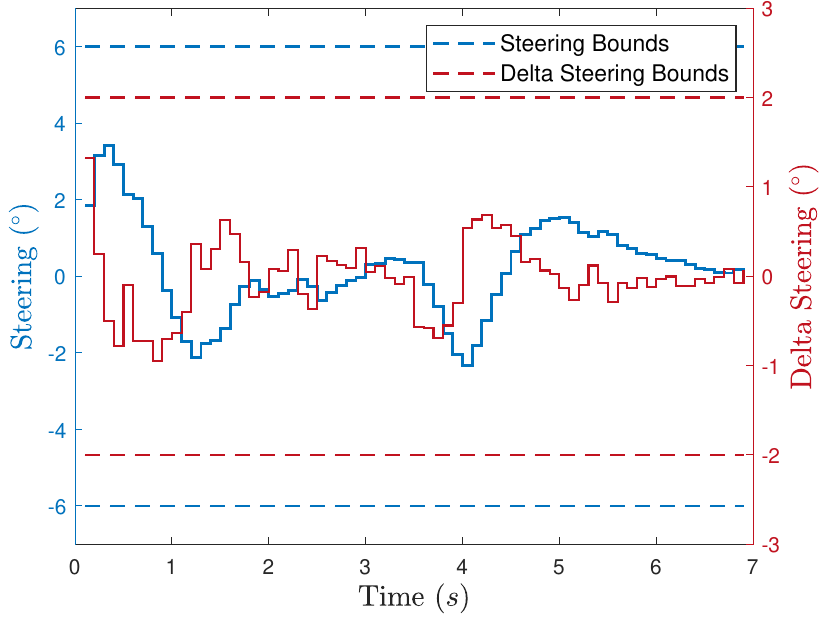}
    \caption{}
    \label{fig:Exp_overtaking-b}
\end{subfigure}
\hspace{1mm}
\begin{subfigure}[t]{0.31\textwidth}
    \includegraphics[width=\textwidth,trim={4cm 8cm 4.3cm 9.05cm},clip]{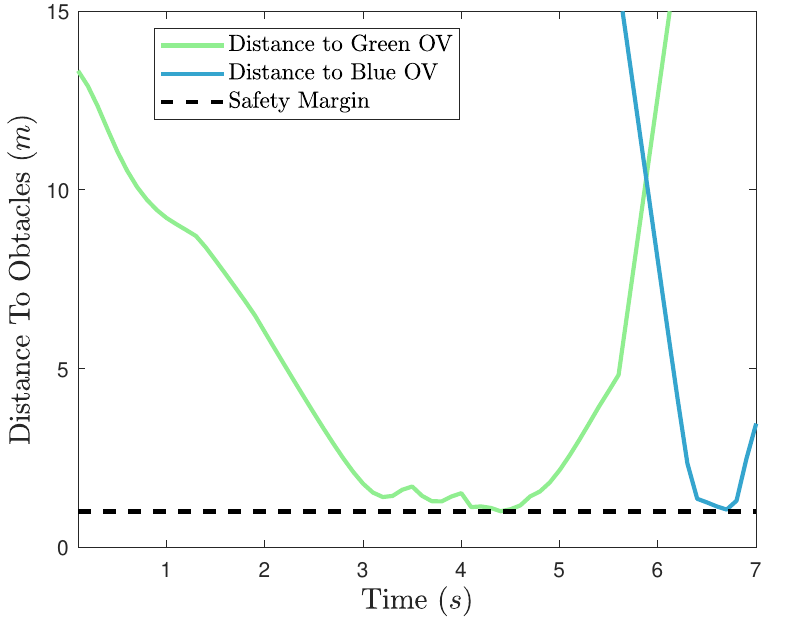}
    \caption{}
    \label{fig:Exp_overtaking-c}
\end{subfigure}

    \caption{Behavior of the DevBot 2.0 car in the overtaking scenario: (a) the acceleration and incremental acceleration profiles in solid curves, with respective bounds in dashed lines; (b) the steering and incremental steering profiles in solid lines, with their respective bounds in dashed lines; (c) the distance between the EV and OVs, with the dashed-line safe margin.}
    \label{fig:Exp_OvertakingControls}
\end{figure*}

\begin{table}[!htbp]
\caption{Performance of the MPIC-X algorithm for the DevBot 2.0 car}

\begin{tabular}{c  c  c  c}
\toprule
\makecell[c]{Horizon ($H$)} & Method & Total Cost & \makecell[c]{Average \\ Computation \\ Time (s)}  \\
\midrule
\multirow{3}{*}{$10$}
 & MPIC-X w. $10$ particles & $4,662$ & $0.034$ \\
 & MPIC-X w. $50$ particles & $4,538$ & $0.147$ \\
 & MPIC-X w. $80$ particles & $4,589$ & $0.217$ \\
\cmidrule(l){1-4}
\multirow{3}{*}{$20$}
 & MPIC-X w. $10$ particles & $4,546$ & $0.102$ \\
 & MPIC-X w. $50$ particles & $4,551$ & $0.469$ \\
 & MPIC-X w. $80$ particles & $4,539$ & $0.659$ \\
\cmidrule(l){1-4}
\multirow{3}{*}{$40$}
 & MPIC-X w. $10$ particles & $4,852$ & $0.256$ \\
 & MPIC-X w. $50$ particles & $4,737$ & $1.108$ \\
 & MPIC-X w. $80$ particles & $4,771$ & $1.502$ \\
\cmidrule(l){1-4}
\multirow{3}{*}{$60$}
 & MPIC-X w. $10$ particles & $4,948$ & $0.430$ \\
 & MPIC-X w. $50$ particles & $4,910$ & $1.826$ \\
 & MPIC-X w. $80$ particles & $4,953$ & $2.784$ \\
\bottomrule
\end{tabular}

\label{Exp-Table: Cost-Comp}
\end{table}

\section{Conclusion}
\label{Sec:Conclusion}
The rise of autonomous driving presents ever-growing demands for better motion planning technologies. MPC has proven to be a useful approach for this application. Meanwhile, machine learning has found its way into vehicle modeling due to its capacity of accurately capturing vehicle dynamics. However, despite the potential for improving motion planning design, machine learning models have been unyielding to MPC, as their strong nonlinearity and nonconvexity resist gradient-based optimization. 

In this paper, we consider the problem of MPC of NSS models and pursue a different perspective---inferring the best control decisions from the control objectives and constraints. This perspective, inspired by the classical control-estimation duality, opens up the avenue for executing MPC through Bayesian estimation and some powerful estimation techniques. Based on this notion, we first reformulate an incremental MPC problem for motion planning into an equivalent Bayesian state smoothing problem. To tackle the problem, we consider particle filtering/smoothing, which, based on sequential Monte Carlo sampling, can handle highly nonlinear systems. This approach, however, often requires large numbers of particles and thus heavy computation to succeed. We then derive and propose implicit particle filtering/smoothing based on banks of unscented Kalman filters/smoothers. This novel approach manages to use much fewer particles in estimation by sampling at highly probable regions of the target distribution to achieve high computational efficiency and accuracy. The resultant framework, called MPIC, and algorithm, called MPIC-X, thus arise out of the development. 

We apply the MPIC-X algorithm to highway driving scenarios via extensive simulations. The simulation results validate the capability of the MPIC-X algorithm in dealing with the control of NSS models for motion planning. Its computation is very fast and almost insensitive to neural network architectures, while well applicable and scalable to long prediction horizons, compared to gradient-based MPC. The proposed framework and algorithm hold a potential for addressing various other robotics and engineering problems that involve the control of machine learning models.

\bibliographystyle{IEEEtran}
\balance
{\footnotesize
\bibliography{ReferenceList}}

\begin{IEEEbiography}[{\includegraphics[width=1in,height=1.25in,clip,keepaspectratio]{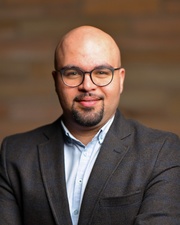}}]{Iman Askari} (Graduate Student Member, IEEE) received the B.S. degree in 2018 and is currently working towards a Ph.D. degree, both  in Mechanical Engineering and at the University of Kansas, Lawrence, KS, USA. His research interests lie at the intersections of control theory,  estimation theory, and machine learning for robotics applications. \end{IEEEbiography}

\begin{IEEEbiography}[{\includegraphics[width=1in,height=1.25in,clip,keepaspectratio]{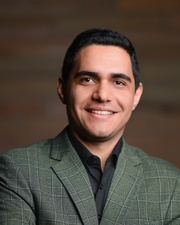}}]{Ali Vaziri} (Graduate Student Member, IEEE) received the B.S. degree in Marine Engineering from Sharif University of Technology, Tehran, Iran, in 2021, where he conducted research on motion planning for surface marine robots. He is currently pursuing a Ph.D. degree in Mechanical Engineering at the University of Kansas, Lawrence, KS, USA. His current research interests include estimation theory and optimal control of high-dimensional data-driven dynamical models. \end{IEEEbiography}

\begin{IEEEbiography}[{\includegraphics[width=1in,height=1.25in,clip,keepaspectratio]{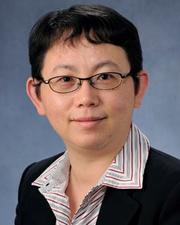}}]{Xumein Tu} received the B.S. degree in Computer Sciences from Beijing Normal University, China in 1997, the M.S. degree in Applied Mathematics from Worcester Polytechnic Institute in 2002, and the Ph.D. degree in Mathematics in 2006  from New York  University.  She is currently a Professor at the Department of Mathematics, University of Kansas, Lawrence, KS, USA. Her main research interests are numerical partial differential equations and data assimilations.  \end{IEEEbiography}
\begin{IEEEbiography}[{\includegraphics[width=1in,height=1.25in,clip,keepaspectratio]{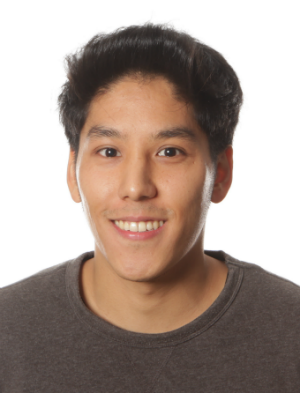}}]{Shen Zeng} (Member, IEEE) studied Engineering Cybernetics, Mechatronics, and Mathematics at the University of Stuttgart, from where he also received a Ph.D. degree in 2016. He is currently an Assistant Professor at Washington University in St. Louis in the Department of Electrical and Systems Engineering. His research interests are in systems theory, and, more generally, applied mathematics. \end{IEEEbiography}
\begin{IEEEbiography}[{\includegraphics[width=1in,height=1.25in,clip,keepaspectratio]{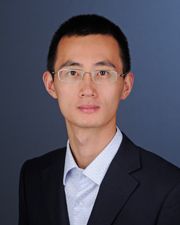}}]{Huazhen Fang}
(Member, IEEE) received the B.Eng. degree in Computer Science and Technology from Northwestern Polytechnic University, Xi’an, China, in 2006, the M.Sc. degree from the University of Saskatchewan, Saskatoon, SK, Canada, in 2009, and the Ph.D. degree from the University of California, San Diego, CA, USA, in 2014, both in mechanical engineering. He is currently an Associate Professor of Mechanical Engineering with the University of Kansas, Lawrence, KS, USA. His research interests include control and estimation theory with application to energy management and robotics. He was a recipient of the 2019 National Science Foundation CAREER Award. He currently serves as an Associate Editor for IEEE Transactions on Industrial Electronics and IEEE Control Systems Letters. \end{IEEEbiography}

\end{document}